\definecolor{bostonuniversityred}{rgb}{0.8, 0.0, 0.0}
\pgfplotsset{compat=newest}
\definecolor{col1}{RGB}{72, 24, 106}
\definecolor{col2}{RGB}{31, 160, 136}
\definecolor{col3}{RGB}{216, 226, 25}
\definecolor{col4}{RGB}{52, 96, 141}
\begin{document}
\title{Scoring rule nets: beyond mean target prediction in multivariate regression}

%
%
\author{Daan Roordink\inst{1}\orcidID{0009-0000-5190-5596} \and
Sibylle Hess\inst{2} (\Letter)\orcidID{0000-0002-2557-4604}}
\tocauthor{Daan Roordink and Sibylle Hess}
\toctitle{Scoring rule nets: beyond mean target prediction in multivariate regression}

\authorrunning{Roordink and Hess}
%
\institute{Enexis Group, 's Hertogenbosch, Netherlands,\\\email{daan.roordink@enexis.nl} \and
Mathematics \& Computer Science Dept.,
   Eindhoven University of Technology,
   Eindhoven, Netherlands\\ \email{s.c.hess@tue.nl}}

\maketitle              
\begin{abstract}
Probabilistic regression models trained with maximum likelihood estimation (MLE), can sometimes overestimate variance to an unacceptable degree. This is mostly problematic in the multivariate domain. While univariate models often optimize the popular Continuous Ranked Probability Score (CRPS), in the multivariate domain, no such alternative to MLE has yet been widely accepted. The Energy Score -- the most investigated alternative -- notoriously lacks closed-form expressions and sensitivity to the correlation between target variables.
In this paper, we propose Conditional CRPS: a multivariate strictly proper scoring rule that extends CRPS. We show that closed-form expressions exist for popular distributions and illustrate their sensitivity to correlation. We then show in a variety of experiments on both synthetic and real data, that Conditional CRPS often outperforms MLE, and produces results comparable to state-of-the-art non-parametric models, such as Distributional Random Forest (DRF).

\keywords{probabilistic regression  \and strictly proper scoring rules \and uncertainty estimation.}
\end{abstract}

\section{Introduction}\label{sec:intro}
The vanilla regression models predict a single target value $y$ for an observation $\mathbf{x}\in\mathbbm{R}^p$. In theory, the  goal is to approximate the \emph{true} regression model $f^*$, generating the observed target values as samples of the random variable $Y = f^*(\mathbf{x})+\epsilon$. The random variable $\epsilon$ reflects here the noise in the data and is assumed to have an expected value of zero. Hence, the goal is to find a regression model that predicts the mean $f(\mathbf{x}) = \mathbbm{E}_Y[Y\mid \mathbf{x}] = f^*(\mathbf{x})$. 

However, in practice, the trained regression models come with uncertainties. Reflecting those uncertainties is relevant, for example when a lower or upper bound for the prediction is of interest, when underforecasting has more detrimental consequences than overforecasting, or when the expected profit and risk are dependent on prediction uncertainty. Examples of such applications are found in weather forecasting~\cite{zhu}, healthcare~\cite{healthcare}, predictions of the electricity price~\cite{Nowotarski2015}, stock price \cite{stock_market}, survival rate~\cite{surival_rate_prediction} and air quality~\cite{air_quality}. 

Distributional regression models provide predictive uncertainty quantification by modeling the target variable as a probability distribution. That is, models are tasked with predicting the distribution of a (possibly multivariate) random variable $Y$, conditioned on an observation $x$ of a (possibly multivariate) covariate random variable $X$:
\begin{equation}\label{eq:distReg}
    f(x) = P(Y \mid X = x)\text{.}
\end{equation}
Here, $P(\cdot)$ denotes the probability distribution of a random variable. Such a model is trained on a dataset of observations of $(X,Y)$: $\{(\mathbf{x}_i, \mathbf{y}_i)\}_{i=1}^n$.

Distributional regression models are typically trained by Maximum Likelihood Estimation (MLE) \cite{Haynes2013}, which is equivalent to minimizing the Logarithmic Score. However, when the assumed and the true shape of the distribution do not match, MLE can become sensitive to outliers \cite{BJERREGARD2021100058}, causing a  disproportionally increase in the forecasted variance \cite{gebetsberger}. While this is not necessarily a problem for homoskedastic models (where typically only a single estimator is predicted and the error distribution is assumed to be constant), it is problematic for heteroskedastic models, predicting the full distribution~\cite{surival_rate_prediction}. Therefore, for a univariate continuous target domain, many distributional regression approaches use the Continuous Ranked Probability Score (CRPS) \cite{crps_source}. CRPS provides an optimization objective that is generally more robust than MLE \cite{lerchrasp} and hence gains in popularity in comparison to MLE~\cite{lerchrasp, surival_rate_prediction, air_quality}.

However, unlike MLE, CRPS has no extension to the multivariate domain ($\mathbf{y}\in\mathbbm{R}^d$) that maintains the robustness properties. The most popular extension is the Energy Score \cite{gneitingRaftery}, but it is known to be insensitive to correlation, and often cannot be analytically evaluated \cite{pinsontastu}. Moreover, other alternatives such as the Variogram Score \cite{variogram_score} also have weaknesses, such as translational invariance.

The lack of a robust alternative to MLE is widely discussed in comparative studies. In their review of probabilistic forecasting, Gneiting and Katzfuss argue that ``a pressing need is to go beyond the univariate, real-valued case,
which we review, to the multivariate case'' \cite{gneitingKatzfuss}. More recently, Alexander et al. conclude that ``it is rarely seen that one metric for evaluating the accuracy of a forecast consistently outperforms another metric, on every single scenario'' \cite{alexander}. As a result, multivariate distributional regression approaches either resort to MLE \cite{MUSCHINSKI2022} or avoid direct usage of distributional optimization criteria, via approaches based on e.g. Generative Adversarial Networks (GANs) \cite{C-GAN-regression} or Random Forests (RF) \cite{DRF}.

\paragraph{Contributions}
\begin{enumerate}
\item We propose a novel scoring rule for multivariate distributions, called Conditional CRPS (CCRPS). The novel scoring rule CCRPS is a multivariate extension of the popular univariate CRPS that is more sensitive to correlation than the Energy Score and (for some distributions) less sensitive to outliers than the Logarithmic Score. We enable the numerical optimization of the proposed scoring rule by proving equivalent, closed-form expressions for a variety of multivariate distributions, whose gradients are easy to compute.
\item We propose two novel loss functions for Artificial Neural Network-based multivariate distributional regression, with loss functions based on Conditional CRPS, and the Energy Score.
\item We show on a variety of synthetic and real-world case studies that the two proposed methods often outperform current state-of-the-art.
\end{enumerate}

\section{Distributional Regression}
Distributional regression models are generally evaluated via two concepts: sharpness and calibration~\cite{gneiting_sharpness}. Calibration is the notion that predictions should match the statistics of the actual corresponding observations. For example, when predicting a $30\%$ chance of snow, snowfall should indeed occur in $30\%$ of the corresponding observations.  The goal of regression can then be formulated to maximize sharpness (i.e. the precision of the predicted distribution) under calibration~\cite{gneiting_sharpness}. For example, using the notation of the introduction, both models $f(x) = P(Y \mid X = x)$ and $g(x) = P(Y)$ are calibrated, but if there exists a dependency between $X$ and $Y$, then $f$ is arguably sharper. For this purpose, proper scoring rules are often used.

\subsection{Proper Scoring Rules}
\textit{Scoring rules} are a class of metrics $R$ that compare a predicted distribution $P$ with actual observations $y$. A scoring rule is called \textit{proper} for a class of probability distributions $ \mathcal{D}$ if for any $P, Q \in \mathcal{D}$ we have:
\begin{equation}\label{eq:prop_sr}
    \mathbbm{E}_{Y\sim P}[R(P, Y)] \leq \mathbbm{E}_{Y\sim P}[R(Q, Y)].
\end{equation}
That is, in expectation over all observations, a scoring rule attains its minimum if the distribution of the observations $Y\sim P$ matches the predicted distribution. A scoring rule is called \textit{strictly proper} if the minimum of the expected scoring rule is uniquely attained at $P$. 
Proper and strictly proper scoring rules pose valuable loss functions for distributional regression models: minimizing the mean scoring rule automatically calibrates the model's predicted distributions, and fits the conditional distributions to the observed data (Equation \eqref{eq:distReg}), arguably maximizing sharpness~\cite{gneitingKatzfuss,lerchrasp}.

For univariate domains, the most popular scoring rules are the Logarithmic Score and the Continuous Ranked Probability Score (CRPS). The Logarithmic Score maximizes the MLE criterion, and is defined as
\begin{equation}
    \mathrm{LogS}(P,y) = - \log f_P(y)
\end{equation}
where $f_P$ is $P$'s probability density function. It is strictly proper for distributions with finite density. CRPS is defined as
\begin{equation}
    \text{CRPS}(P,y) = \int_{-\infty}^\infty \left[F_P(z) - \mathbbm{1}(y \leq z)\right]^2 dz\text{,}
\end{equation}
where $F_P$ is $P$'s cumulative density function. CRPS is strictly proper for distributions with finite first moment. The emphasis on sharpness of CRPS, while maintaining calibration, is considered a major upside~\cite{gneiting_sharpness, surival_rate_prediction}.

For the multivariate domain, popular scoring rules are the multivariate extension of the Logarithmic Score (which evaluates the negative logarithm of the multivariate density function), as well as the Energy Score \cite{gneitingRaftery}:
\begin{equation}\label{eq:ES}
\text{ES}_\beta(P, y) = \mathbbm{E}_{Y \sim P}\left[\lVert Y - y \rVert_2^\beta\right] - \frac{1}{2}\mathbbm{E}_{Y, Y' \sim P} \left[\lVert Y - Y' \rVert_2^\beta\right]
\end{equation}
Here, $\lVert . \rVert_2$ denotes the Euclidean norm and $\beta \in (0,2)$. For $\beta = 1$ the Energy Score is a multivariate extension of CRPS~\cite{gneitingRaftery}.  Both rules are strictly proper for almost all multivariate distributions (the Logarithmic Score requires finite density and the Energy Score requires $\mathbbm{E}_{Y \sim P}[\lVert Y \rVert_2^\beta] < \infty$). However, as mentioned in the introduction, both the Logarithmic and Energy Scores have known drawbacks, which demands the introduction of new strictly proper scoring rules.

\subsection{Conditional CRPS}\label{sec:CCRPS}
We propose a family of (strictly) proper scoring rules, called \emph{Conditional CRPS} (CCRPS). To introduce this scoring rule, we consider a simple example of a bivariate Gaussian distribution
\begin{align*}(Y_1, Y_2) \sim \mathcal{N}(\boldsymbol\mu,\Sigma), \text{ where } \Sigma=\begin{pmatrix}\sigma_1^2 & \rho\sigma_1\sigma_2\\ \rho\sigma_1\sigma_2& \sigma_2^2\end{pmatrix},\  \sigma_1,\sigma_2>0 \text{, and }\rho\in (-1, 1).
\end{align*}
Rather than evaluating $P(Y_1, Y_2)$ directly against an observation, we instead evaluate the first marginal distribution $P(Y_1) = \mathcal{N}(\mu_1, \sigma_1^2)$, and second conditional distribution $P(Y_2\mid Y_1 = y) = \mathcal{N}(\mu_2 + \frac{\sigma_2}{\sigma_1}\rho(y - \mu_1), (1-\rho)^2\sigma_2^2)$, against their respective univariate observations, via use univariate scoring rules. Summation over these terms then defines a new multivariate scoring rule $R$:
\begin{equation}\label{eq:CCRPS_example}
    R(P, \mathbf{y}) = \text{CRPS}(P(Y_1), y_1) + \text{CRPS}(P(Y_2\mid Y_1=y_1), y_2)\text{.}
\end{equation}
Conditional CRPS generalizes the intuition that multivariate scoring rules can be constructed by evaluating univariate conditional and marginal distributions.
\begin{definition}[Conditional CRPS]\label{def:CCRPS}
Let $P(Y)$ be a $d$-variate probability distribution over a random variable $Y = (Y_1, \ldots, Y_d)$, and let $\mathbf{y} \in \mathbbm{R}^d$. Let $\mathcal{T}=\{(v_i, \mathcal{C}_i)\}_{i=1}^q$ be a set of tuples, where $v_i \in \{1, ..., d\}$ and $\mathcal{C}_i \subseteq \{1, ..., d\} \setminus \{v_i\}$. Conditional CRPS (CCRPS) is then defined as:
\begin{align}
\mathrm{CCRPS}_{\mathcal{T}}(P(Y),\mathbf{y})
    &= \sum_{i=1}^q \mathrm{CRPS}(P(Y_{v_i} \mid Y_j = y_j\text{ for } j \in \mathcal{C}_i), y_{v_i})\text{,}
\end{align}
where $P(Y_{v_i} \mid Y_j = y_j\text{ for } j \in \mathcal{C}_i)$ denotes the conditional distribution of $Y_{v_i}$ given observations $Y_j = y_j$ for all $j \in \mathcal{C}_i$.

In the case that $P(Y_{v_i} \mid Y_j = y_j\text{ for } j \in \mathcal{C}_i)$ is ill-defined for observation $y$ (i.e. the conditioned event $Y_j = y_j\text{ for } j \in \mathcal{C}_i$ has zero likelihood or probability), we define $\text{CRPS}(P(Y_{v_i} \mid Y_j = y_j\text{ for } j \in \mathcal{C}_i), y_{v_i}) = \infty$.
\end{definition}
Conditional CRPS defines a family of scoring rules via a conditional specification $\mathcal{T}$ (see Figure \ref{fig:CCRPS}). For example, choosing $d = 2$ and $\mathcal{T} = \{(1, \emptyset), (2, \{1\})\}$ yields the rule $R$ that is defined in Equation \eqref{eq:CCRPS_example}. Conditional CRPS often defines useful scoring rules, as members are always proper, and often strictly proper:
\begin{figure}[t]
\centering
  \includegraphics[width=0.6\textwidth]{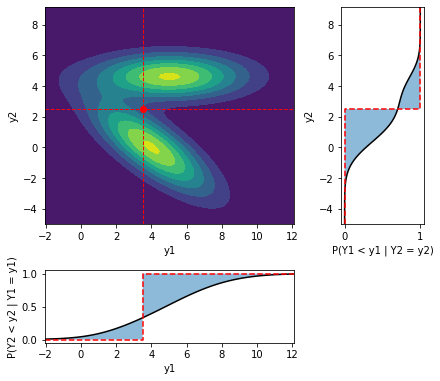}
  \caption{Visualization of Conditional CRPS, using $d = 2$ and $\mathcal{T} = \{(2, \{1\}), (1, \{2\})\}$. CCRPS evaluates an observed multivariate distribution sample by computing the distribution's univariate conditionals, conditioned on observations for other variates. \label{fig:CCRPS}}
\end{figure}

\begin{theorem}[Propriety of Conditional CRPS]\label{thrm:prop_CRPS}
    Consider CCRPS, as defined in Definition \ref{def:CCRPS}. For every choice of $\mathcal{T} = \{(v_i, \mathcal{C}_i)\}_{i=1}^q$, $\mathrm{CCRPS}_{\mathcal{T}}$ is proper for $d$-variate distributions with finite first moment.
\end{theorem}
Theorem \ref{thrm:prop_CRPS} can be easily deduced from the univariate strict propriety of CRPS, by writing the expected CCRPS score as a sum of expected CRPS scores. A formal proof is given in Appendix A.1. However, when setting some restrictions on the choice for $\mathcal{T}$, we can also prove a broad notion of strict propriety:
\begin{theorem}[Strict propriety of Conditional CRPS]
Consider CCRPS, as defined in Definition \ref{def:CCRPS}. Let $\mathcal{T}=\{(v_i, \mathcal{C}_i)\}_{i=1}^q$ be chosen such that there exists a permutation $\phi_1, \ldots, \phi_d$ of $1, \ldots, d$ such that:
\begin{equation}\label{eq:strict_proper_spec}
(\phi_j, \{\phi_1, \ldots, \phi_{j-1}\}) \in \mathcal{T} \text{ for } 1\leq j\leq d\text{.}
\end{equation}
$CCRPS_{\mathcal{T}}$ is strictly proper for all $d$-variate distributions with finite first moment, that are either discrete\footnote{I.e. distributions $P$ for which a countable set $\Omega \subset \mathbbm{R}^d$ exists such that $\mathbbm{P}_{Y \sim P}(Y \in \Omega) = 1$.}or absolutely continuous\footnote{I.e. distributions $P$ for which a Lebesgue integratable function $f_P: \mathbbm{R}^d \to [0, \infty)$ exists, such that for all measurable sets $U \subseteq \mathbbm{R}^d$, we have $\mathbbm{P}_{Y \sim P}(Y \in U) = \int_U f_P(u)du$.}.
\end{theorem}
This can be proven by using the conditional chain rule to show that any two distinct multivariate distributions differ in at least one specified conditional. Strict propriety of CRPS is then used to show strict inequality in expectancy of this CRPS term. Formal proofs are given in Appendices A.2 and A.3.

Unfortunately, there exists no CCRPS variant that is strictly proper for all distributions with finite first moment, as problems arise with distributions that are neither continuous nor discrete. This is shown in Appendix A.4.

\subsubsection{Closed-form expressions}
Unlike the Energy Score, it is surprisingly easy to find closed-form expressions for Conditional CRPS. Many popular families of multivariate distributions have marginals and conditionals which themselves are members of popular univariate distributions, many of which already have known closed-form CRPS expressions \cite{JSSv090i12}. To illustrate this, in Appendix B, in which we have provided closed-form expressions for (mixtures of) multivariate Gaussian distributions, the Dirichlet distribution, the multivariate Log-normal distribution and the multivariate student-t distribution.


\begin{figure}[t]
\centering
  \input{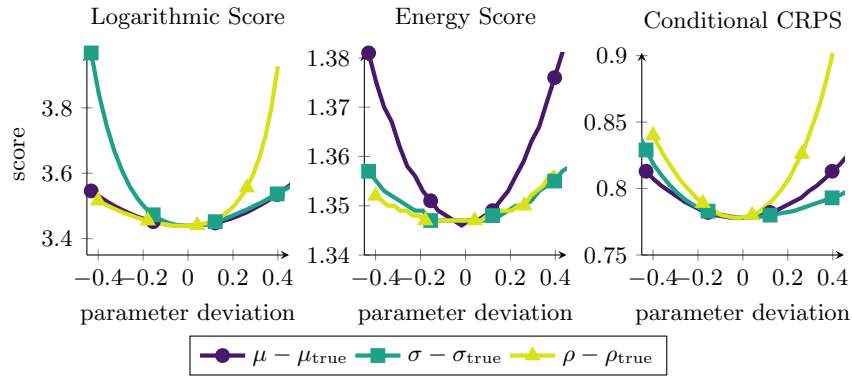} 

\pgfplotsset{
muStyle/.style={col1,mark options ={col1},mark repeat={8}, ultra thick, error bars/.cd,y dir = both, y explicit},
sigmaStyle/.style={col2,mark options ={col2},mark repeat={8}, ultra thick, error bars/.cd,y dir = both, y explicit},
rhoStyle/.style={col3,mark=triangle*,mark options ={col3},mark repeat={8},ultra thick, error bars/.cd,y dir = both, y explicit},
}

\begin{tikzpicture}
    \begin{groupplot}[group style={group size= 3 by 1, vertical sep=0.6cm},
    	height=.35\textwidth,
    	width=.35\textwidth,
        legend columns =-1, 
        axis lines = left]
        \nextgroupplot[ylabel={score}, xlabel={parameter deviation},
        	title={Logarithmic Score},
        	xmin=-0.45,xmax=0.45, ymin=3.35,
        	legend to name=zelda]
        	\addplot+[muStyle]  table[x=x,y=y] {mu_log.dat};
            \addlegendentry{$\mu-\mu_\text{true}$};
        	\addplot+[sigmaStyle]  table[x=x,y=y] {sig_log.dat};
            \addlegendentry{$\sigma-\sigma_\text{true}$};
            \addplot+[rhoStyle]  table[x=x,y=y] {rho_log.dat};
            \addlegendentry{$\rho-\rho_\text{true}$};
        \nextgroupplot[
        	title={Energy Score}, xlabel={parameter deviation},
        	xmin=-0.45,xmax=0.45, ymin=1.34,
            ]
        	\addplot+[muStyle]  table[x=x,y=y] {mu_es.dat};
        	\addplot+[sigmaStyle]  table[x=x,y=y] {sig_es.dat};
            \addplot+[rhoStyle]  table[x=x,y=y] {rho_es.dat};
        \nextgroupplot[
        	title={Conditional CRPS}, xlabel={parameter deviation},
            xmin=-0.45,xmax=0.45, ymin=0.75,
            ]
        	\addplot+[muStyle]  table[x=x,y=y] {mu_ccrps.dat};
        	\addplot+[sigmaStyle]  table[x=x,y=y] {sig_ccrps.dat};
            \addplot+[rhoStyle]  table[x=x,y=y] {rho_ccrps.dat};
    \end{groupplot} 
\end{tikzpicture}\\

\pgfplotslegendfromname{zelda}
  \caption{Plot of mean score values against the deviation of a predicted distribution parameter from the true distribution parameter. We evaluate three strictly proper scoring rules with respect to the deviation of the predicted mean, standard deviation or correlation coefficient from the data distribution ($\mu_\text{true} = 1$, $\sigma_\text{true} = 1$ and $\rho_\text{true} = 0.4$). See Appendix D.  \label{fig:score_comparison}}
\end{figure}

\subsubsection{Correlation sensitivity}
Conditional CRPS displays promising advantages over the Energy and the Logarithmic Score with regard to correlation sensitivity. We evaluate the correlation sensitivity by a small experiment, similar to the one by Pinson and Tastu \cite{pinsontastu}. Here, we investigate the increase in expected scores when the forecasted distribution deviates from the data distribution in either the mean, standard deviation, or correlation coefficient. The data generating algorithm is described in Appendix D. We compare three scoring rules: the Logarithmic, the Energy Score, and CCRPS with $\mathcal{T} = ((1,\{2\}),(2,\{1\}))$. Figure \ref{fig:score_comparison} shows that the CCRPS score increases more with the prediction error in $\rho$ than the Logarithmic and the Energy score. Therewith, the CCRPS score fixes the well documented lack of correlation sensitivity of the Energy Score \cite{alexander, pinsontastu}.

\subsection{CCRPS as ANN Loss Function for Multivariate Gaussian Mixtures}
We show an application of Conditional CRPS as a loss function that allows for the numerical optimization of Artificial Neural Networks (ANNs)~\cite{NN_explanation} to return the parameters of the predicted distribution of target variables in a regression task. 
We assume that the target distribution is a mixture of $m$ $d$-variate Gaussian distributions. This distribution is defined by $m$ mean vectors $\boldsymbol{\mu}_1, \ldots, \boldsymbol{\mu}_m \in \mathbbm{R}^d$, $m$ positive-definite matrices $\Sigma_1, \ldots, \Sigma_m \in \mathbbm{R}^{d \times d}$, and $m$ weights $w_1, \ldots, w_m \in [0,1]$ such that $\sum_{i=1}^m w_i = 1$. A multivariate mixture Gaussian $P$ defined by these parameters is then given by the density function
\begin{equation}\label{eq:MixGauss1}
    f_P(\mathbf{y}) = \sum_{l=1}^m w_l \cdot f_{\mathcal{N}(\boldsymbol{\mu}_l, \Sigma_l)}(\mathbf{y}) = \sum_{l=1}^m w_l \frac{\exp\left(-\frac{1}{2}( \mathbf{y}-\boldsymbol{\mu}_l)^\top \Sigma_i^{-1}(\mathbf{y}-\boldsymbol{\mu}_l)\right)}{\sqrt{(2\pi)^d \cdot \left|\Sigma_l\right|}}\text{.}
\end{equation}
That is, the ANN returns for each input $\mathbf{x}$ a set of parameters $\{(\boldsymbol{\mu_l},w_l, L_l)\}_{l=1}^m$, where
$L_l\in \mathbbm{R}^{d \times d}$ is a Cholesky lower matrix~\cite{MUSCHINSKI2022}, defining a positive-definite matrix
$\Sigma_i = L_i \cdot L_i^\top$\text{.}
Given a dataset $(\mathbf{x}_i, \mathbf{y}_i)_{i=1}^n$, and an ANN $\theta(\mathbf{x})$ that predicts the parameters of a $d$-variate mixture Gaussian distribution, we can define a loss function over the mean CCRPS score:
\begin{equation}
    \mathcal{L}(\theta, (\mathbf{x}_i, \mathbf{y}_i)_{i=1}^n) = \frac{1}{n}\sum_{i=1}^n \mathrm{CCRPS}_{\mathcal{T}}(P_{\theta(\mathbf{x}_i)}, \mathbf{y}_i)\text{.}
\end{equation}
Unfortunately, if we choose $\mathcal{T}$ such that the loss function computes mixture Gaussian distributions conditioned on $c$ variables, then we require matrix inversions of $c \times c$ matrices (cf.\@ Appendix B).\footnote{Support for backpropagation through matrix inversions is offered in packages such as Tensorflow. However, for larger matrices, gradients can become increasingly unstable.} Therefore, we choose a simple Conditional CRPS variant that conditions on at most one variable, using $\mathcal{T}_0 = \{(i, \emptyset)\}_{i=1}^d \cup \{(i, \{j\})\}_{i \neq j}^d$. That is,
\begin{align*}
\mathrm{CCRPS}_{\mathcal{T}_0}(P, \mathbf{y}) &= \sum_{i=1}^d\mathrm{CRPS}(P(Y_i), y_i) + \sum_{j \neq i} \mathrm{CRPS}(P(Y_i | Y_j = y_j), y_i)
.
\end{align*}
Using this definition, we find an expression for this variant of CCRPS. As both $P(Y_i|Y_j = y_j)$ and $P(Y_i)$ are univariate mixture Gaussian distributions, computing $\mathrm{CCRPS}_{\mathcal{T}_0}(P, y)$ is done by simply computing the parameters of these distributions, and applying them in a CRPS expression for univariate mixture Gaussian distributions given by Grimit et al. \cite{grimit}:\begin{theorem}[CCRPS expression for multivariate mixture Gaussians]
    Let $P$ be a mixture of $m$ $d$-variate Gaussians, as defined in Equation \eqref{eq:MixGauss1} via $\boldsymbol{\mu}_k \in \mathbbm{R}^d$, $\Sigma_k \in \mathbbm{R}^{d \times d}$ and $w_k \in [0,1]^m$ for $1\leq k\leq m$. Then we have for $\mathbf{y} \in \mathbbm{R}^d$:
    
    \begin{align*}
    \begin{split}
        &\mathrm{CCRPS}_{\mathcal{T}_0}(P, \mathbf{y})=\\ &\sum_{1\leq i \neq j \leq d} \left[\sum_{k=1}^m \hat w_{kj} H(y_i - \hat \mu_{kij}, \hat \Sigma_{kij})
    - \frac{1}{2} \sum_{k,l=1}^m \hat w_{kj} \hat w_{lj} H(\hat \mu_{kij} - \hat \mu_{lij}, \hat \Sigma_{kij} + \hat \Sigma_{lij})\right]\\
    &+ \sum_{i = 1}^d \left[\sum_{k=1}^m w_k H(y_i - \mu_{k,i}, \Sigma_{k,ii})
    - \frac{1}{2} \sum_{k,l=1}^m w_k w_l H(\mu_{k,i} - \mu_{l,i}, \Sigma_{k,ii} + \Sigma_{k,ll})\right]  \end{split}
    \end{align*}
    Here:
    \begin{itemize}
        \item $H(\mu, \sigma^2) = \mu \left(2\Phi\left(\frac{\mu}{\sigma}\right)-1\right) + 2\sigma\varphi\left(\frac{\mu}{\sigma}\right)$, where $\varphi$ and $\Phi$ denote the PDF and CDF of the standard Gaussian distribution,
        \item $\displaystyle\hat w_{kj} = \frac{w_k \cdot f_{\mathcal{N}(\mu_{k,j},\Sigma_{k,jj})}(y_j)}{\sum_{l=1}^m w_l \cdot f_{\mathcal{N}(\mu_{l,j},\Sigma_{l,jj})}(y_j)}$, 
        \item $\hat \mu_{kij} = \mu_{k,j} + \frac{\Sigma_{k,ij}}{\Sigma_{k,jj}}\left(y_j - \mu_{k,j}\right)$,
        \item $
            \hat \Sigma_{kij} = \Sigma_{k,ii}-\frac{(\Sigma_{k,ij})^2}{\Sigma_{k,jj}}$.
        \end{itemize}
    \end{theorem}
In Appendix B, we state an expression for the more generic case $\text{CRPS}(P(Y_i|Y_j = y_j \text{ for } j \in \mathcal{C}_j), y_i)$.
\begin{figure*}[t]
  \includegraphics[width=\textwidth]{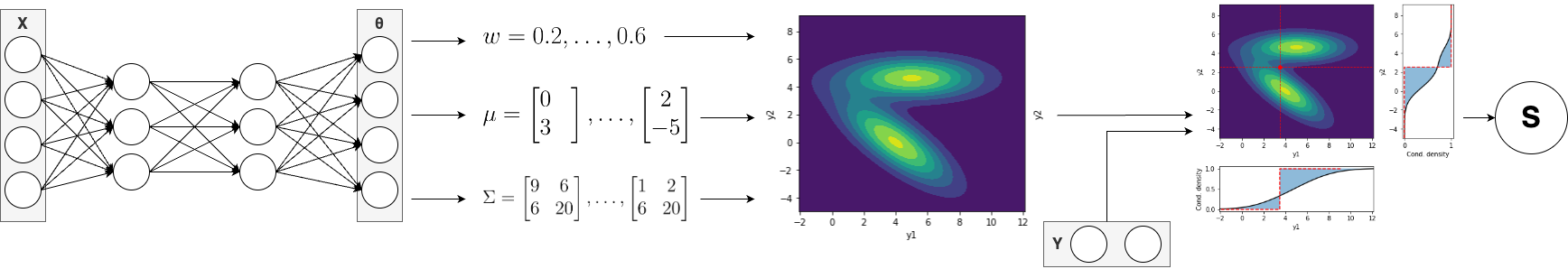}
  \caption{The output layer of a CCRPS network defines a set of weights, mean vectors and positive definite matrices, via a combination of activation functions and Cholesky parameterizations. These parameters define the predicted multivariate mixture Gaussian distribution, which is evaluated against an observation via CCRPS loss.\label{fig:CCRPS_overivew}}
\end{figure*}
An overview of the proposed mixture Gaussian CCRPS ANN approach is given in Figure \ref{fig:CCRPS_overivew}. The approach to predict a mixture model via a single network contrasts the multiple-network approach via bagging, used by a.o. Carney et al. \cite{Carney_2005}, and simplifies the architecture.

\subsection{Energy Score Ensemble Models}
Secondly, we propose an ANN loss variant that empirically approximates the Energy Score. The energy score (cf.\@  Equation \eqref{eq:ES}) is defined over expected values, for which no closed-form expression exists, that would enable the computation of a gradient. However, the Energy Score is fairly easily approximated by an ensemble of randomly sampled vectors. Let $P$ be a $d$-variate distribution, and let $\hat{\mathbf{y}}_1, \ldots, \hat{\mathbf{y}}_m$ be independent samples of the distribution $P$. We approximate $P$ by its empirical distribution function by assigning probability $\frac{1}{m}$ to each sampled vector $\hat{\mathbf{y}}_l$. That is, we use the \emph{stepwise} multivariate CDF approximation:
\begin{equation}
F_P(\mathbf{z}) \approx \frac{1}{m}\sum_{l=1}^m \prod_{i=1}^d \mathbbm{1}(\hat{y}_{l,i} \leq z_i)
\end{equation}
We can now approximate the Energy Score:
\begin{equation}\label{eq:ES_approx}
    \text{ES}(P, \mathbf{y}) \approx \frac{1}{m}\sum_{l=1}^m \lVert \hat{\mathbf{y}}_l - \mathbf{y} \rVert_\varepsilon - \frac{1}{2m^2} \sum_{k,l=1}^m \lVert \hat{\mathbf{y}}_k - \hat{\mathbf{y}}_l\rVert_\varepsilon\text{.}
\end{equation}
Here, $\lVert.\rVert_\varepsilon$ is the $\varepsilon$-smoothed Euclidean norm
$
\lVert \mathbf{v} \rVert_\varepsilon = \sqrt{\varepsilon + \lVert \mathbf{v}\rVert^2_2}
$,
for some small $\varepsilon > 0$. The $\varepsilon$-smoothed Euclidean norm makes the norm differentiable, even at $\mathbf{v}=0$. This approximation allows for numerical optimization, in which a model predicts $P$ indirectly over $\hat{\mathbf{y}}_1, \ldots, \hat{\mathbf{y}}_m$. That is, we can train an ANN to return for each feature vector $\mathbf{x}$ the distribution defining parameters $\theta(\mathbf{x})=\hat{\mathbf{y}}_1, \ldots, \hat{\mathbf{y}}_m$, using the loss defined in Equation \eqref{eq:ES_approx}. This approach is similar to the recent, independently developed work by Kanazawa and Gupta~\cite{es_model}, and can be considered a non-generative and conditioned version of their approach.
\section{Experiments}
We compare the probabilistic predicted performance of the newly proposed methods to state-of-the-art probabilistic regression methods. We provide our source code online.\footnote{\url{https://github.com/DaanR/scoringrule_networks}} As competitors, we choose the best-performing models of the comparative study from Ćevid et al. \cite{DRF}, and the Logarithmic Score trained networks.
\begin{itemize}
    \item \textbf{Distributional Random Forest (DRF)} \cite{DRF} is a random forest regression model with an adapted splitting criterion for target vectors (based on MMD approximations), and an adapted aggregation that returns a weighted ensemble of target vectors.
    \item \textbf{Conditional GAN (CGAN)} \cite{C-GAN-regression} is an extension of the popular Generative Adverserial Network. Except, the model is ``conditioned'' on input $x$ by adding it as input to both generator and discriminator.
    \item \textbf{Distributional k-nearest neighbors (kNN)}~\cite{DRF} predicts a distribution in which each of the k-nearest neighbors is assigned $\frac{1}{k}$ probability.
    \item \textbf{Mixture MLE neural networks} (a.o. ~\cite{mle}) are the closest to our approach. MLE ANNs use the Logarithmic Score as loss function. We employ the same architectures as MLE networks in our CCRPS networks.
\end{itemize}
For DRF and CGAN, we use implementations provided by the authors. For mixture MLE networks and kNN, we used our own implementations. Similar to CCRPS, we prevent the numerically unstable backpropagation through large matrix inverses by applying the Logarithmic Score on all bivariate marginal densities $P(Y_i, Y_j)$, rather than on the multivariate density. This way, we could improve on the originally proposed implementation of MLE minimization \cite{mle}:
\begin{equation}
    \mathrm{MLE}_\text{biv}(P(Y),\mathbf{y}) = -\sum_{1\leq i \neq j\leq d} \log f_{P(Y_i, Y_j)}(y_i, y_j).
\end{equation}
$\mathrm{MLE}_\text{biv}$ is strictly proper for $d \leq 2$ and proper for $d > 2$. For both, MLE-trained networks and CCRPS-trained networks, we try variants with $m \in \{1,10\}$ (Gaussian) mixture distributions. For each model and each experiment, we choose the best hyperparameters and architecture out of a variety of hyperparameters/architectures, based on the validation Energy Score. Furthermore, for all ANN-based models, we use the validation set loss as a training cutoff criterion: training is stopped once the validation set increases compared to the previous epoch.
\subsection{Evaluation metrics}
Unfortunately, there is no clear consensus on appropriate evaluation metrics for multivariate distributional regression models~\cite{alexander}. Hence, we choose a variety of popular metrics: the Energy Score (cf.\@ Equation \eqref{eq:ES}) with $\beta = 1$, and the Variogram Score \cite{variogram_score} with $\beta \in \{0.5, 1,2\}$:
\begin{equation}
    \text{VS}_{\beta}(P, \mathbf{y}) = \sum_{1\leq i < j\leq d} \left(|y_i - y_j|^\beta - \mathbbm{E}_{Y \sim P}\left[|Y_i - Y_j|^\beta\right]\right)^2\text{.}
\end{equation}
The Variogram Score is only proper but usually better at evaluating errors in the forecasted correlation than the Energy Score \cite{alexander}. For most models, the scores are approximated via Monte Carlo approximations (see Appendix C for details).

Contrary to the comparative studies done by Aggarwal et al. \cite{C-GAN-regression} and {\'C}evid et al. \cite{DRF}, we decide not to use the Logarithmic Score (also named NLPD) as evaluation metric, since the ES ensemble model, kNN, C-GAN and DRF do not predict an explicit density function, and we found that the Logarithmic Score is fairly dependent on the choice of density estimation for the post-processing. All datasets are split into training, validation, and testing dataset. We summarize dataset statistics in Table~\ref{tab:dataset_summary}.

\begin{table}[t]
    \centering
    \caption{Dataset statistics: input dimensionality ($p$), target dimensionality ($d$), as well as training ($n_\text{train}$), validation ($n_\text{val}$) and testing ($n_\text{test}$) dataset sizes. For the synthetic datasets, the morphing function is also listed.}
    \begin{minipage}[t]{66mm}
    \begin{tabular}{l@{\hskip 0.1in}lrrrrr}
        \toprule
        \bfseries Name & \texttt{morph}(y') & $p$ & $d$ &  $n_\text{train}$ & $n_\text{val}$ & $n_\text{test}$ \\
        \midrule
        \bfseries Gauss 2D & $2 y' + 2$ & 40 & 2 & 6K & 2K & 2K\\
        \bfseries Gauss 5D & $2 y' + 2$ & 100 & 5 & 6K & 2K & 2K\\
        \bfseries Quadratic & $y'^2$ & 40 & 2 & 6K & 2K & 2K\\
        \bottomrule
    \end{tabular}
    \end{minipage}
    \begin{minipage}[t]{50mm}
    \begin{tabular}{l@{\hskip 0.1in}rrrrr}
        \toprule
        \bfseries Name & $p$ & $d$ &  $n_\text{train}$ & $n_\text{val}$ & $n_\text{test}$ \\
        \midrule
        \bfseries Births & 23 & 2 & 18K & 6K & 6K\\
        \bfseries Air & 25 & 6 & 26K & 8.5K & 8.5K\\
        \bfseries GR--GEM & 8 & 8 & 18K & 6K & 6K\\
        \bfseries GR--GFS &  8 & 8 & 18K & 6K & 6K\\
        \bfseries GR--GFS & 160 & 8 & 18K & 6K & 6K\\
        \bfseries GR--comb. & 176 & 8 & 18K & 6K & 6K\\
        \bfseries GR \& DR & 176 & 48 & 14K & 4.5K & 4.5K\\
        \bottomrule
    \end{tabular}
    \end{minipage}
    \label{tab:dataset_summary}
\end{table}


\subsection{Synthetic Experiments}
We base our data generation process for the synthetic experiments on the task to post-process an ensemble model. This model is for example applied in the task of weather forecasts (cf. experiments on the global radiation data in Section~\ref{sec:rw_exp}). Here, a distributional regression model receives $s$ (non probabilistic) predictions $\mathbf{v}_1, \ldots, \mathbf{v}_s \in \mathbbm{R}^d$ for a target variable $\mathbf{y} \in \mathbbm{R}^d$. That is, the probabilistic regression model is supposed to learn the target distribution from the distribution of target predictions of an ensemble of models $\mathbf{v}_1, \ldots, \mathbf{v}_s \in \mathbbm{R}^d$. In other words, the probabilistic regression model is trained to correct ensemble predictions. 
For each observation, we sample $s=20$ i.i.d. vectors from a Gaussian with randomly chosen parameters, and sample the target vector from the same distribution. To further simulate errors in the ensemble predictions, we apply a morphing operation (either $\texttt{morph}(y') = 2y' + 2$ or $\texttt{morph}(y') = y'^2$) on the target vector. An overview of morphing functions is given in Table \ref{tab:dataset_summary}.
\begin{algorithm}
    \caption{Synthetic data sampling of a single $(\mathbf{x}, \mathbf{y})$ pair.}\label{alg:pred_observ_sampling}
    \begin{algorithmic}
        \Function{GenerateRegressionData}{$\texttt{morph}, d, s = 20$}
            \State Sample $\boldsymbol\mu\in\mathbbm{R}^d$ such that $\mu_j\sim \mathcal{N}_1(0,1)$\hfill\Comment{Choose a random mean vector}
            \State  $L\leftarrow 0 \in\mathbbm{R}^{d\times d}$
            \State Sample $L_{jl}\sim \mathcal{N}_1(0,1)$ for $j\geq l$\hfill\Comment{Choose a random Cholensky lower matrix}
            \State $L_{jj}\leftarrow \lvert L_{jj}\rvert$ for $1\leq j\leq d$ \hfill \Comment{Ensure a strictly positive diagonal}
            \For{ $r\in\{1,\ldots,s\}$}
             \State Sample $\mathbf{v}_r\sim \mathcal{N}(\boldsymbol\mu, LL^\top)$ \hfill\Comment{Each $\mathbf{v}_r$ is a $d$-dimensional vector}
            \EndFor
            \State $\mathbf{x} \leftarrow \texttt{flatten}(\mathbf{v}_1, \ldots, \mathbf{v}_{s})$\hfill\Comment{The input is a vector of length $d \cdot s$}
            \State Sample $\mathbf{y}' \sim \mathcal{N}(\boldsymbol\mu, LL^\top)$. \hfill\Comment{Sample a $d$-dimensional vector i.i.d. to $\mathbf{v}_1, \ldots, \mathbf{v}_{s}$}
            \State $\mathbf y_i \leftarrow \texttt{morph}(\mathbf{y}'_i)$ for $1 \leq i \leq d$. \hfill\Comment{Apply a simple morph to the target vector}
            \State\Return $(\mathbf{x, y})$
        \EndFunction
    \end{algorithmic}
\end{algorithm}

The experiment results have been summarized in Table \ref{tab:synth_results}. We note that ANN models seem particularly suited for the chosen experiments, with the CCRPS mixture model outperforming the other models on 6 of the 12 evaluated metrics.

\begin{table}
\caption{Synthetic experiment evaluation metrics (ES, VS) are displayed in a group of four rows, and the best score is highlighted. * Scores divided by $10^7$. }\label{tab:synth_results}
\begin{tabular}{l@{\hskip 0.1in}l@{\hskip 0.1in}llll@{\hskip 0.15in}l@{\hskip 0.1in}l@{\hskip 0.1in}l@{\hskip 0.1in}l}
\toprule
& & \begin{tabular}[c]{@{}l@{}}\bfseries CCRPS\\ Gauss.\end{tabular} & \begin{tabular}[c]{@{}l@{}}\bfseries CCRPS\\ mixt.\end{tabular} & \begin{tabular}[c]{@{}l@{}}\bfseries ES Ens.\\ 100 pts.\end{tabular} & \begin{tabular}[c]{@{}l@{}}\bfseries MLE\\ Gauss.\end{tabular} & \begin{tabular}[c]{@{}l@{}}\bfseries MLE\\ mixt.\end{tabular} & \begin{tabular}[c]{@{}l@{}} \bfseries KNN\\ \\\end{tabular} & \begin{tabular}[c]{@{}l@{}} \bfseries CGAN\\ \\\end{tabular} & \begin{tabular}[c]{@{}l@{}} \bfseries DRF\\ \\\end{tabular}\\

 \midrule
\multirow{4}{*}{\rotatebox[origin=c]{90}{2D-Gauss.}} & ES & 2.1426 & 2.1427 & 2.1233 & 2.1261 & \bfseries{2.0953} & 2.1918 & 2.1983 & 2.1753\\
& VS$_{0.5}$ & 0.5257 & \bfseries{0.5018} & 0.5112 & 0.5196 & 0.5042 & 0.5734 & 0.5199 & 0.5364\\
 & VS$_1$ & 8.0213 & \bfseries{7.6323} & 7.8361 & 7.9425 & 7.6766 & 8.7665 & 7.8844 & 8.1639\\
 & VS$_2$ & 1553.6 & \bfseries{1494.0} & 1534.8 & 1560.8 & 1519.8 & 1656.8 & 1517.5 & 1558.3\\
 \midrule
 \multirow{4}{*}{\rotatebox[origin=c]{90}{5D-Gauss.}} & ES & 5.1889 & \bfseries 5.1691 & 5.1925 & 5.2693 & 5.1903 & 5.2804 & 5.4533 & 5.4577 \\
& VS$_{0.5}$ & 6.7784 & 6.7728 & 6.7921 & 6.9214 & \bfseries 6.7612 & 7.2873 & 7.1653 & 7.0541\\
 & VS$_1$ & 120.67 & 120.56 & 120.53 & 123.71 & \bfseries 120.35 & 131.24 & 129.04 & 127.17\\
 & VS$_2$ & 30087 & 30149 & \bfseries 29996 & 30935 & 30073 & 31233 & 30756 & 30473\\
 \midrule
  \multirow{4}{*}{\rotatebox[origin=c]{90}{Quadratic}} & ES & 2.7206 & \bfseries 2.6668 & 2.6764 & 2.7989 & 2.6678 & 2.8597 & 2.7766 & 2.6745 \\
& VS$_{0.5}$ & \bfseries{1.0087} & 1.0297 & 1.0228 & 1.0926 & 1.0216 & 1.1951 & 1.0245 & 1.0638 \\
 & VS$_1$ & 33.371 & 33.851& \bfseries{33.741} & 34.796 &  34.055 & 37.589 & 34.912 & 35.386 \\
 & VS$_2$* & 130.44 & \bfseries{128.34} & 129.78 & 142.05 & 130.53 & 133.27 & 133.16 & 132.19\\
\bottomrule
\\
\end{tabular}
\label{tab:exp_results}
\end{table}

\subsection{Real World Experiments}\label{sec:rw_exp}
We evaluate our method on a series of real-world datasets for multivariate regression. All datasets are normalized for each input and target field based on the training dataset mean and standard deviation.
\begin{enumerate}
    \item \textbf{Births dataset}~\cite{DRF}: prediction of pregnancy duration (in weeks) and a newborn baby's birthweight (in grams) based on statistics of both parents.
    \item \textbf{Air quality dataset}~\cite{DRF}: Predictio of the concentration of six pollutants (NO$_2$, SO$_2$, CO, O$_3$, PM$_{2.5}$ and PM$_{10}$) based on statistics about the measurement conditions (e.g., place and time)
    \item \textbf{Global radiation dataset}: Prediction of solar radiation based on three numerical weather prediction (NWP) models (the single-model run models GEM~\cite{GEM} and GFS~\cite{GFS} and the 20-ensemble model run GEPS~\cite{GEPS}), as well as global radiation (GR) measurements at weather stations in the Netherlands~\cite{KNMI_uurdata} and Germany~\cite{DWD_uurdata}. Models receive an NWP forecast as input, and a station measurement as target. 
    In our experiments, models predict an 8-variate distribution, consisting of three-hour GR averages. We run four different experiments, in which models receive either GEM, GFS, GEPS or all three NWP sources as input.
    
    \item \textbf{Global-diffuse radiation dataset}: Prediction of 24 hourly global and diffuse radiation (DR) station measurements based on all three NWP sources (like in the global radiation dataset). 

\end{enumerate}

\begin{table}[!t]
\centering
\caption{Real-world experiment metrics (ES, VS) are displayed in a group of four rows, and the best score is highlighted.\label{tab:rw_results}} 
\begin{tabular}{l@{\hskip 0.1in}l@{\hskip 0.1in}llll@{\hskip 0.15in}l@{\hskip 0.1in}l@{\hskip 0.1in}l@{\hskip 0.1in}l}
\toprule
& & \begin{tabular}[c]{@{}l@{}}\bfseries CCRPS\\ Gauss.\end{tabular} & \begin{tabular}[c]{@{}l@{}}\bfseries CCRPS\\ mixt.\end{tabular} & \begin{tabular}[c]{@{}l@{}}\bfseries ES Ens.\\ 100 pts.\end{tabular} & \begin{tabular}[c]{@{}l@{}}\bfseries MLE\\ Gauss.\end{tabular} & \begin{tabular}[c]{@{}l@{}}\bfseries MLE\\ mixt.\end{tabular} & \begin{tabular}[c]{@{}l@{}} \bfseries KNN\\ \\\end{tabular} & \begin{tabular}[c]{@{}l@{}} \bfseries CGAN\\ \\\end{tabular} & \begin{tabular}[c]{@{}l@{}} \bfseries DRF\\ \\\end{tabular}\\

 \midrule
\multirow{4}{*}{\rotatebox[origin=c]{90}{Births}} & ES & 0.6969 & 0.6891 & 0.6897 & 0.7025 & \bfseries 0.6881 & 0.7028 & 0.7140 & 0.6924 \\
 & VS$_{0.5}$ & 0.1039 & \bfseries{0.1034}& 0.1035 & 0.1041 & \bfseries{1.1034} & 0.1052 & 0.1063 & 0.1039  \\
 & VS$_1$ & 0.2627 & 0.2612 & 0.2612 & 0.2632 & \bfseries{0.2608} & 0.2657 & 0.2688 & 0.2627 \\
 &  VS$_2$ & 1.3137 & 1.3069 & 1.3069 & 1.3147 & \bfseries{1.3000} & 1.3258 & 1.3353 & 1.3121 \\
 \midrule
\multirow{4}{*}{\rotatebox[origin=c]{90}{Air}} & ES & 1.0912 & 1.0844 & 1.0887 & 1.0919 & 1.0881 & 1.1420 & 1.1887 & \bfseries{1.0683} \\
 & VS$_{0.5}$ & 1.8501 &1.8380 & 1.8471 & 1.9146 & 1.8562 & 1.9490 & 2.0619 & \bfseries{1.8041}\\
 & VS$_1$ & 8.5582 & 8.5118 &  8.5433 & 8.5348 & 8.8644 & 9.0913 & 9.6012 &  \bfseries{8.3894} \\
 & VS$_2$ & 578.74 & \bfseries{572.88} &  572.90 & 584.74 & 575.10 & 591.58 & 605.32 & 574.34 \\
 \midrule
\multirow{4}{*}{\rotatebox[origin=c]{90}{GR--GEM}} & ES & 0.0988 & 0.0983 & 0.0978 & 0.1013 & 0.0983 & 0.0998 & 0.1216 & \bfseries{0.0960} \\
 & VS$_{0.5}$ & 0.2476 & 0.2544 & 0.2267 & 0.2492 & 0.2459 & 0.2046 & 0.3114 & \bfseries{0.1926} \\
 & VS$_1$ & 0.1705 & 0.1704 & 0.1692 & 0.1742 & 0.1710 & 0.1743 & 0.2273 & \bfseries{0.1657} \\
 & VS$_2$ & 0.1026 & 0.1028 & 0.1019 & 0.1053 & 0.1022 & 0.1043 &  0.1363 & \bfseries{0.1001} \\
 \midrule
\multirow{4}{*}{\rotatebox[origin=c]{90}{GR--GFS}} & ES & 0.1020 & 0.1006 & 0.0998 & 0.1014 &  0.0992 & 0.1058 & 0.1525 & \bfseries{0.0968} \\
 & VS$_{0.5}$ & 0.2590 & 0.2636 & 0.2448 & 0.2530 & 0.2519 &  0.2273 &  0.5265 & \bfseries{0.1978} \\
 & VS$_1$ & 0.1752 & 0.1744 & 0.1740 & 0.1732 & 0.1718 & 0.1899 & 0.3066 & \bfseries{0.1667} \\
 & VS$_2$ & 0.1033 & 0.1042 & 0.1025 & 0.1027 & 0.1004 & 0.1113 & 0.1723 & \bfseries{0.0993} \\
 \midrule
\multirow{4}{*}{\rotatebox[origin=c]{90}{GR--GEPS}} & ES & 0.0759 & \bfseries{0.0722} & 0.0723 & 0.0820 &  0.0755 & 0.0806 & 0.1195 & 0.0840 \\
&VS$_{0.5}$ & 0.1800 & 0.1760 & \bfseries{0.1430} & 0.1950 & 0.1793 & 0.1743 & 0.3129 & 0.1589\\
 & VS$_1$ & 0.1070 & 0.0988 & \bfseries{0.0966} & 0.1245 & 0.1120 & 0.1247 & 0.2177 & 0.1589 \\
 & VS$_2$ & 0.0614 & 0.0588 & \bfseries{0.0566} & 0.0748 & 0.0672 & 0.0733 & 0.1236 & 0.0800\\
 \midrule
\multirow{4}{*}{\rotatebox[origin=c]{90}{GR--comb.}} & ES & 0.0734 & \bfseries{0.0718} & 0.0723 & 0.0766 & 0.0745 & 0.0795 & 0.1209 & 0.0828 \\
& VS$_{0.5}$ & 0.1756 & 0.1754 & 0.1496  & 0.1806 & 0.1765 & \bfseries{0.1438} & 0.3202 & 0.1550 \\
 & VS$_1$ & 0.1017 & \bfseries{0.0985} & 0.0998 & 0.1119 & 0.1085 & 0.1218 & 0.2254 & 0.1318 \\
 & VS$_2$ & 0.0596 & 0.0595 & \bfseries{0.0594} & 0.0672 & 0.0638 &0.0722 & 0.1346 & 0.0785\\
 \midrule
\multirow{4}{*}{\rotatebox[origin=c]{90}{GR \& DR}} & ES & 1.4269 & \bfseries{1.3924}& 1.4229 & 1.4819 & 1.4873 & 1.5564 & 1.8090 & 1.5464 \\
& VS$_{0.5}$ & \bfseries 46.395 & 47.592 & 51.205 & 48.045 & 46.413 & 51.027 & 77.996 & 51.104 \\
 &VS$_1$ & 210.27 & \bfseries{204.60} &  208.41 & 224.72 & 216.43 & 242.05 & 311.35 & 245.35 \\
 & VS$_2$ & 3593.1 & \bfseries{3419.0} & 3503.3 & 3775.3 & 3715.8 & 3963.4 & 5983.5 & 4037.2\\
\bottomrule
\\
\end{tabular}
\label{tab:exp_results}
\end{table}
\begin{figure}[t]
    \centering
    \includegraphics[width=0.22\textwidth]{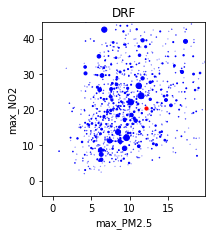}
    \hfill
    \includegraphics[width=0.22\textwidth]{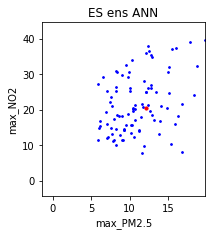}
    \hfill
    \includegraphics[width=0.22\textwidth]{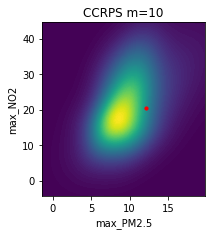}
    \hfill
    \includegraphics[width=0.22\textwidth]{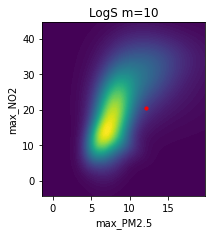}
    \caption{NO$_2$ (in ng/m$_3$) and PM$_{2.5}$ (in p.p.b.) predictions of the best four models for an entry in the ''air'' experiment testing set. The red dot denotes the target measurement.}
    \label{fig:air_quality_vis}
    \centering
    \includegraphics[width=0.45\textwidth]{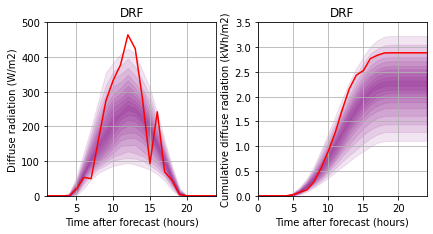}
    \hfill
    \includegraphics[width=0.45\textwidth]{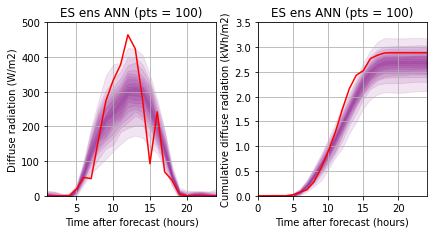}
    \includegraphics[width=0.45\textwidth]{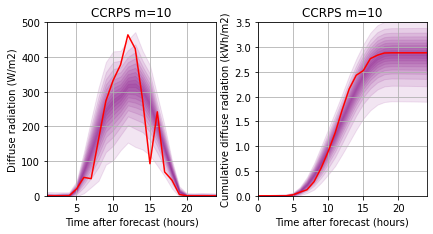}
    \hfill
    \includegraphics[width=0.45\textwidth]{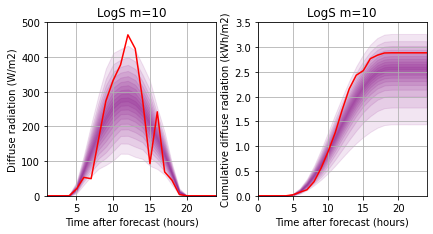}
    \caption{Diffuse irradiance predictions (95\% confidence intervals) of the best four models for an entry in the ``GR-DR'' experiment testing set. Both the marginal and cumulative distributions are visualized. The red line denotes the target measurement.}
    \label{fig:solar_rad_vis}
\end{figure}
The experiment results have been summarized in Table \ref{tab:exp_results}. Here, the testing set evaluation metrics have been listed to evaluate predictive performance on unseen data. The newly proposed models are on par with current state-of-the-art, outscoring them on about half (13 of the 28) of the evaluated metrics. CCRPS trained models do seem to outperform their MLE trained equivalents: the Gaussian CCRPS models outperform their MLE counterparts 23 out of 28 times, and the mixture CCRPS models outperform their MLE counterparts 18 out of 28 times.

However, in our experiments, none of the evaluated models consistently outperforms all other models.  Generally, one of four models (the MLE and CCRPS mixture models, ES ensemble model and DRF) scored best, with CCRPS and DRF scoring best most often. Unfortunately, we have not been able to link the relative model performances to the experiment's characteristics, as there seems to be no clear connection between the experiment nature (temporal data, tabular data or synthetic data) and the relative model results.

Finally, some visualizations of predicted distributions and their target variables have been made in Figures \ref{fig:air_quality_vis} and \ref{fig:solar_rad_vis}.



\section{Conclusion}
We propose two new loss functions for multivariate probabilistic regression models: Conditional CRPS and the approximated Energy Score. CCRPS is a novel class of (strictly) proper scoring rules, which combines some of the desirable characteristics (suitability for numerical optimization, sensitivity to correlation, and increased sharpness) from the Energy and Logarithmic Scores.

Conditional CRPS, when applied in the right setting, leads to an increase in sharpness while retaining calibration. We parameterize our regression models by means of an Artificial Neural Network (ANN), which returns for a given feature vector $\mathbf{x}$ the parameters of the predicted (conditional) target distribution. Models trained with CCRPS outperform equivalent models trained with MLE on the majority of evaluated experiments. Moreover, the novel models, trained with CCRPS and Energy Score loss, have predictive performances on par with non-parametric state-of-the-art approaches, such as DRF (cf. Tables \ref{tab:synth_results} and \ref{tab:rw_results}).

%
%
%
\bibliographystyle{splncs04}
\bibliography{main}

\appendix
\section{Proofs}
Recall Conditional CRPS, as defined in Section 2.2. Let $P(Y)$ be a $d$-variate probability distribution over a random variable $Y = (Y_1, \ldots, Y_d)$, and let $\mathbf{y} \in \mathbbm{R}^d$. Let $\mathcal{T}=\{(v_i, \mathcal{C}_i)\}_{i=1}^q$ be a set of tuples, where $v_i \in \{1, ..., d\}$ and $\mathcal{C}_i \subseteq \{1, ..., d\} \setminus \{v_i\}$. Conditional CRPS (CCRPS) is then defined as:
\begin{align}
\mathrm{CCRPS}_{\mathcal{T}}(P(Y),\mathbf{y})
    &= \sum_{i=1}^q \mathrm{CRPS}(P(Y_{v_i} \mid Y_j = y_j\text{ for } j \in \mathcal{C}_i), y_{v_i})\text{,}
\end{align}
where $P(Y_{v_i} \mid Y_j = y_j\text{ for } j \in \mathcal{C}_i)$ denotes the conditional distribution of $Y_{v_i}$ given observations $Y_j = y_j$ for all $j \in \mathcal{C}_i$. In the case that $P(Y_{v_i} \mid Y_j = y_j\text{ for } j \in \mathcal{C}_i)$ is ill-defined for observation $y$ (i.e. the conditioned event $Y_j = y_j\text{ for } j \in \mathcal{C}_i$ has zero likelihood or probability), we define $\text{CRPS}(P(Y_{v_i} \mid Y_j = y_j\text{ for } j \in \mathcal{C}_i), y_{v_i}) = \infty$.

In this appendix, we will provide formal proofs for the various (strict) propriety claims made in the main paper. First we provide some helper proofs. We first show that if the true distribution has finite first moment, Conditional CRPS is finite for a correct prediction. This is used during the (strict) propriety proofs.
\begin{lemma}[Finiteness of Conditional CRPS for correct predictions]\label{applem:finiteness}
Let $Y = (Y_1, \ldots Y_d)$ be a $d$-variate variable with finite first moment. We then have:
\begin{equation}
    \mathbbm{E}_{y \sim P(Y)}[\text{CCRPS}_\mathcal{T}(P(Y),y)] < \infty\text{.}
\end{equation}
\end{lemma}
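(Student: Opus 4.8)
The plan is to bound $\mathrm{CCRPS}_\mathcal{T}(P(Y),y)$ one summand at a time and then use linearity of the expectation. Since $\mathrm{CCRPS}_\mathcal{T}(P(Y),y)=\sum_{i=1}^q \mathrm{CRPS}(P(Y_{v_i}\mid Y_j=y_j\text{ for }j\in\mathcal{C}_i),y_{v_i})$ and every summand is nonnegative (allowing the value $+\infty$), it suffices to prove $\mathbbm{E}_{y\sim P(Y)}[\mathrm{CRPS}(P(Y_{v_i}\mid Y_j=y_j\text{ for }j\in\mathcal{C}_i),y_{v_i})]<\infty$ for each fixed $i$. Below I abbreviate $Y_{\mathcal{C}_i}=(Y_j)_{j\in\mathcal{C}_i}$ and $y_{\mathcal{C}_i}=(y_j)_{j\in\mathcal{C}_i}$.

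First I would record the elementary pointwise bound $\mathrm{CRPS}(F,x)\le \mathbbm{E}_{X\sim F}|X-x|$, valid for every univariate CDF $F$ and every $x\in\mathbbm{R}$. This follows by writing $\mathrm{CRPS}(F,x)=\int_{-\infty}^{x}F(t)^2\,dt+\int_{x}^{\infty}(1-F(t))^2\,dt$, using $F^2\le F$ and $(1-F)^2\le 1-F$ since $0\le F\le 1$, and then the standard identities $\int_{-\infty}^{x}F(t)\,dt=\mathbbm{E}_{X\sim F}[(x-X)^+]$ and $\int_{x}^{\infty}(1-F(t))\,dt=\mathbbm{E}_{X\sim F}[(X-x)^+]$, whose sum is $\mathbbm{E}_{X\sim F}|X-x|$. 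If $F$ has infinite first moment the right-hand side is $+\infty$ and the inequality is vacuous, consistent with the $\infty$ convention in the definition.

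Next I would dispose of the ill-defined case. The conditional law $P(Y_{v_i}\mid Y_j=y_j\text{ for }j\in\mathcal{C}_i)$ exists as a regular conditional distribution on $\mathbbm{R}$ for $P(Y_{\mathcal{C}_i})$-almost every value of the conditioning vector; hence the set of $y$ on which it is ill-defined (equivalently, on which the summand is set to $\infty$) is a $P(Y)$-null set and may be ignored inside the expectation. On its complement, applying the bound above with $F=P(Y_{v_i}\mid Y_j=y_j\text{ for }j\in\mathcal{C}_i)$ and $x=y_{v_i}$, followed by the triangle inequality, gives, for $P(Y)$-almost every $y$,
\[
\mathrm{CRPS}\bigl(P(Y_{v_i}\mid Y_j=y_j\text{ for }j\in\mathcal{C}_i),\,y_{v_i}\bigr)\;\le\;\mathbbm{E}_{Y'_{v_i}\sim P(Y_{v_i}\mid Y_j=y_j\text{ for }j\in\mathcal{C}_i)}\bigl[\,|Y'_{v_i}|\,\bigr]\;+\;|y_{v_i}|.
\]
Taking $\mathbbm{E}_{y\sim P(Y)}$ of both sides (all terms nonnegative, so Tonelli applies): the $|y_{v_i}|$ term contributes $\mathbbm{E}|Y_{v_i}|$, and the first term equals $\mathbbm{E}_{y_{\mathcal{C}_i}\sim P(Y_{\mathcal{C}_i})}\bigl[\mathbbm{E}[\,|Y_{v_i}|\mid Y_{\mathcal{C}_i}=y_{\mathcal{C}_i}]\bigr]=\mathbbm{E}|Y_{v_i}|$ by the tower property. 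Hence the $i$-th term is at most $2\,\mathbbm{E}|Y_{v_i}|<\infty$ because $Y$ has finite first moment, and summing over $i=1,\dots,q$ yields the lemma.

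I expect the only delicate point to be the measure-theoretic bookkeeping around the ill-defined case: one must invoke existence of a regular conditional distribution on $\mathbbm{R}$ so that the conditioning is valid outside a $P(Y)$-null set, and one must observe that under a single draw $y\sim P(Y)$ the outcome coordinate $y_{v_i}$ and the conditioning coordinates $y_{\mathcal{C}_i}$ are coupled, so that the law of $Y'_{v_i}$ given $y_{\mathcal{C}_i}$ is exactly the conditional law appearing in the CRPS term — which is what makes the tower-property step go through. Everything else (the two-line CRPS bound, Tonelli, and the triangle inequality) is routine.
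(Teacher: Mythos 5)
Your proof is correct and follows essentially the same route as the paper's: bound each summand by $\mathrm{CRPS}(F,x)\le \mathbbm{E}_{X\sim F}|X-x|\le |x|+\mathbbm{E}_{X\sim F}|X|$, apply the tower property to get $2\,\mathbbm{E}|Y_{v_i}|$ per term, and sum over $i$. The only differences are cosmetic: you derive the intermediate bound from the Brier-integral form of CRPS rather than the Gneiting--Raftery kernel form, and you are more explicit than the paper about why the ill-defined (hence $\infty$-valued) conditionals occur only on a $P(Y)$-null set.
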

\begin{proof}
We use the alternate but equivalent CRPS expression given by Gneiting and Raftery \cite{gneitingRaftery} to find an upper bound for the CRPS of a univariate distribution $D$ and observation $y$:
\begin{equation}
\text{CRPS}(D,y) = \mathbbm{E}_{x \sim D}|x-y| - \frac{1}{2}\mathbbm{E}_{x, x' \sim D}|x-x'| \leq \mathbbm{E}_{x \sim D}|x-y| \leq |y| + \mathbbm{E}_{x \sim D}|x|\text{.}
\end{equation}
Using this upper bound, and the finiteness of $X_{v_i}$'s first moment, we find:
\begin{align}
\begin{split}
&\mathbbm{E}_{y \sim P(Y)}[\text{CRPS}(P(Y_{v_i} \mid \forall_{j \in \mathcal{C}_i}: Y_j = y_j), y_{v_i})]ƒ\\ &\leq \mathbbm{E}_{y \sim P(Y_{v_i})}|y| + \mathbbm{E}_{y \sim P(Y)}[\mathbbm{E}_{z \sim P(Y_{v_i} \mid \forall_{j \in \mathcal{C}_i}: Y_j = y_j)}|z|] \\
&= \mathbbm{E}_{y \sim P(Y_{v_i})}|y| + \mathbbm{E}_{y \sim P(Y_{v_i})}|y| = 2\cdot \mathbbm{E}_{y \sim P(Y_{v_i})}|y|< \infty\text{.}
\end{split}
\end{align}
Lastly, using $\mathbbm{E}[\sum_{i=1}^q X_i] = \sum_{i=1}^q\mathbbm{E}[X_i]$ for $X \in \mathbbm{R}^d$, we find:
\begin{align}
\begin{split}
\mathbbm{E}_{y \sim P(Y)}[\text{CCRPS}_\mathcal{T}(P(Y),y)] = \sum_{i=1}^q \mathbbm{E}_{y \sim P(Y)}\left[ \text{CRPS}(P(Y_{v_i} \mid \forall_{j \in \mathcal{C}_i}: Y_j = y_j), y_{v_i})\right] < \infty\text{.}
\end{split}
\end{align}
\end{proof}
\subsection{Non-strict propriety of CCRPS}
We use Lemma \ref{applem:finiteness} to prove non-strict propriety.
\begin{lemma}[Propriety of Conditional CRPS]\label{lem:proper}
Let $\mathcal{T}=\{(v_i, \mathcal{C}_i)\}_{i=1}^q$ be a set of tuples, where $v_i \in \{1, ..., d\}$ and $\mathcal{C}_i \subseteq \{1, ..., d\} \setminus \{v_i\}$. Then $\text{CCRPS}_\mathcal{T}$ is proper for distributions with finite first moment. For all $d$-variate random variables $A, B$ we have:
\begin{equation}
\mathbbm{E}_{y \sim P(A)}[\text{CCRPS}_\mathcal{T}(P(A),y)] \leq \mathbbm{E}_{y \sim P(A)}[\text{CCRPS}_\mathcal{T}(P(B),y)]
\end{equation}
\end{lemma}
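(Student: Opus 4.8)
The plan is to reduce this multivariate claim to the classical propriety of the univariate CRPS, applied to each conditional distribution separately and then averaged using the tower property of conditional expectation. Throughout, write $\mathbf{y}_{\mathcal{C}_i} = (y_j)_{j \in \mathcal{C}_i}$, and for a $d$-variate law $Q$ abbreviate $Q(Y_{v_i} \mid \mathbf{y}_{\mathcal{C}_i}) := Q(Y_{v_i} \mid Y_j = y_j \text{ for } j \in \mathcal{C}_i)$. Since CRPS is non-negative and $\text{CCRPS}_\mathcal{T}$ is a finite sum of CRPS terms, linearity of expectation gives $\mathbbm{E}_{y\sim P(A)}[\text{CCRPS}_\mathcal{T}(P(Q),y)] = \sum_{i=1}^q \mathbbm{E}_{y\sim P(A)}[\text{CRPS}(Q(Y_{v_i}\mid\mathbf{y}_{\mathcal{C}_i}), y_{v_i})]$ for $Q \in \{A, B\}$ (with Lemma~\ref{applem:finiteness} guaranteeing the $A$-term is finite, so no $\infty-\infty$ arises). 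Hence it suffices to prove, for each fixed $i$,
\begin{equation}
\mathbbm{E}_{y \sim P(A)}\big[\text{CRPS}(A(Y_{v_i}\mid \mathbf{y}_{\mathcal{C}_i}), y_{v_i})\big] \;\le\; \mathbbm{E}_{y \sim P(A)}\big[\text{CRPS}(B(Y_{v_i}\mid \mathbf{y}_{\mathcal{C}_i}), y_{v_i})\big].
\end{equation}

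Fix $i$. Both integrands depend on $y$ only through the pair $(y_{v_i}, \mathbf{y}_{\mathcal{C}_i})$, so I would disintegrate the expectation over $y \sim P(A)$: first sample $\mathbf{y}_{\mathcal{C}_i}$ from its $A$-marginal, then sample $y_{v_i}$ from $A(Y_{v_i}\mid\mathbf{y}_{\mathcal{C}_i})$. If that $A$-marginal puts positive mass on the set of conditioning values for which $B(Y_{v_i}\mid\mathbf{y}_{\mathcal{C}_i})$ is ill-defined, the right-hand side is $+\infty$ by our convention and the inequality is trivial; so assume $B(Y_{v_i}\mid\mathbf{y}_{\mathcal{C}_i})$ is well-defined for $A$-almost every $\mathbf{y}_{\mathcal{C}_i}$. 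For each such $\mathbf{y}_{\mathcal{C}_i}$, the univariate CRPS is a proper scoring rule on distributions with finite first moment \cite{gneitingRaftery}, and $A(Y_{v_i}\mid\mathbf{y}_{\mathcal{C}_i})$ inherits a finite first moment for $A$-a.e.\ conditioning value; hence
\begin{equation}
\mathbbm{E}_{z \sim A(Y_{v_i}\mid \mathbf{y}_{\mathcal{C}_i})}\big[\text{CRPS}(A(Y_{v_i}\mid \mathbf{y}_{\mathcal{C}_i}), z)\big] \;\le\; \mathbbm{E}_{z \sim A(Y_{v_i}\mid \mathbf{y}_{\mathcal{C}_i})}\big[\text{CRPS}(B(Y_{v_i}\mid \mathbf{y}_{\mathcal{C}_i}), z)\big].
\end{equation}

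Finally, taking $\mathbbm{E}_{\mathbf{y}_{\mathcal{C}_i}\sim A}$ of both sides of this pointwise inequality is legitimate because expectation is monotone and both sides are $[0,\infty]$-valued; the tower property then turns the result into exactly the per-$i$ inequality displayed above. Summing over $i=1,\dots,q$ and using linearity of expectation yields $\mathbbm{E}_{y\sim P(A)}[\text{CCRPS}_\mathcal{T}(P(A),y)] \le \mathbbm{E}_{y\sim P(A)}[\text{CCRPS}_\mathcal{T}(P(B),y)]$, which is the claim. The main obstacle is purely measure-theoretic bookkeeping: justifying the disintegration of $P(A)$ into the $A$-marginal of $Y_{\mathcal{C}_i}$ and the conditional law $A(Y_{v_i}\mid\mathbf{y}_{\mathcal{C}_i})$, checking that this conditional has finite first moment for $A$-a.e.\ conditioning value (so the left side of the univariate propriety inequality is finite rather than vacuous), and cleanly handling the case where $B$'s conditional is undefined on a set of positive $A$-probability. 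Once these are in place, the proof is just univariate propriety plus linearity of expectation.
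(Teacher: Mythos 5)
Your proposal is correct and follows essentially the same route as the paper: linearity of expectation to reduce to one conditional term per tuple, disintegration of $P(A)$ into the marginal over $\mathcal{C}_i$ and the conditional over $v_i$, univariate propriety of CRPS applied pointwise to each well-defined conditional, and the convention that an ill-defined $B$-conditional on a set of positive $A$-mass makes the right-hand side infinite. The measure-theoretic caveats you flag (a.e.\ finiteness of the conditional first moment, validity of the disintegration) are handled in the paper by Lemma~\ref{applem:finiteness} and a footnote, so nothing essential is missing.
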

\begin{proof}
Consider two random variables $A = (A_1, \ldots, A_d)$ and $B = (B_1, \ldots, B_d)$ with finite first moment. Consider an arbitrary $\mathcal{T}$ as defined in the lemma statement. First, we expand the expected value for Conditional CRPS:
\begin{align}\label{eq:full_exp_A}
\begin{split}
&\mathbbm{E}_{y \sim P(A)}[\text{CCRPS}_\mathcal{T}(P(B),y)] \\&= \mathbbm{E}_{y \sim P(A)}\left[ \sum_{i=1}^q\text{CRPS}(P(A_{v_i} \mid \forall_{j \in \mathcal{C}_i}: A_j = y_j), y_{v_i})\right]\\
&= \sum_{i=1}^q \mathbbm{E}_{y \sim P(A)}\left[ \text{CRPS}(P(A_{v_i} \mid \forall_{j \in \mathcal{C}_i}: A_j = y_j), y_{v_i})\right]\\
&= \sum_{i=1}^q \mathbbm{E}_{y' \sim P(A_{\mathcal{C}_i})}\left[\mathbbm{E}_{z \sim P(A_{v_i} \mid \forall_{j \in \mathcal{C}_i}: A_j = y'_j)}\left[\text{CRPS}(P(A_{v_i} \mid \forall_{j \in \mathcal{C}_i}: A_j = y'_j), z)\right]\right]\text{.}
\end{split}
\end{align}
Here $P(A_{\mathcal{C}_i})$ denotes $P(A)$'s marginal distribution for the variables $(A_j)_{j \in \mathcal{C}_i}$. Similarly, we rewrite for $P(B)$:\footnote{Technically, if for any $1 \leq i \leq q$ we have a nonzero probability that $P(B_{v_i} \mid \forall_{j \in \mathcal{C}_i}: B_j = y_j)$ is ill-defined, we find $\mathbbm{E}_{y \sim A}[\text{CCRPS}_\mathcal{T}(B,y)] = \infty$, but this does not affect the statement made in Equation \eqref{appeq:non_strict_prop}.}
\begin{align}\label{eq:full_exp_B}
\begin{split}
&\mathbbm{E}_{y \sim P(A)}[\text{CCRPS}_\mathcal{T}(P(B),y)]\\
&= \sum_{i=1}^q \mathbbm{E}_{y' \sim P(A_{\mathcal{C}_i})}\left[\mathbbm{E}_{z \sim P(A_{v_i} \mid \forall_{j \in \mathcal{C}_i}: A_j = y'_j)}\left[\text{CRPS}(P(B_{v_i} \mid \forall_{j \in \mathcal{C}_i}: B_j = y'_j), z)\right]\right]\text{.}
\end{split}
\end{align}
Finally, noting finiteness of $\mathbbm{E}_{y \sim P(A)}[\text{CCRPS}_\mathcal{T}(P(A),y)]$ (Lemma \ref{applem:finiteness}), we apply univariate strict propriety of CRPS on each conditional term, and we find non-strict propriety:
\begin{equation}\label{appeq:non_strict_prop}
    \mathbbm{E}_{y \sim P(A)}[\text{CCRPS}_\mathcal{T}(P(A),y)] \leq \mathbbm{E}_{y \sim P(A)}[\text{CCRPS}_\mathcal{T}(P(B),y)]
\end{equation}
\end{proof}

\subsection{Discrete strict propriety of CCRPS}
We first introduce a helper lemma, which is used later in the strict propriety proofs.
\begin{lemma}[Probability of difference in conditionals and strict propriety]\label{lem:strict_proper_req}
Let $\mathcal{T}=\{(v_i, \mathcal{C}_i)\}_{i=1}^q$ be a set of tuples, where $v_i \in \{1, ..., d\}$ and $\mathcal{C}_i \subseteq \{1, ..., d\} \setminus \{v_i\}$. Let $\mathcal{D}$ be a set of $d$-variate distributions. Assume that for all distinct distributions $P(A), P(B) \in \mathcal{D}$ over random variables $A = (A_1, \ldots A_d)$ and $B = (B_1, \ldots, B_d)$ we have:\footnote{We denote the probability of an event $q$ by $\mathbbm{P}(q)$, and the probability distribution over a random variable $Q$ by $P(Q)$. Let $P(A_{\mathcal{C}_i})$ denote the marginal distribution for the variable $(A_j)_{j \in \mathcal{C}_i}$.Equation \eqref{eq:strict_prop_statement} thus denotes the chance that two conditional distributions are different.}
\begin{equation}\label{eq:strict_prop_statement}
    \mathbbm{P}_{y \sim P(A)}\left[P(A_{v_i} \mid \forall_{j \in \mathcal{C}_i}: A_j = y_j) \neq P(B_{v_i} \mid \forall_{j \in \mathcal{C}_i}: B_j = y_j)\right] > 0\text{,}
\end{equation}
that is, the two distributions will have one differing conditional with nonzero probability. Then $\text{CCRPS}_\mathcal{T}$ is strictly proper for $\mathcal{D}$.
\end{lemma}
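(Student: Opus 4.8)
The plan is to build on Lemma \ref{lem:proper}, which already gives the non-strict inequality, and on Lemma \ref{applem:finiteness}, which guarantees that the ``self-score'' $\mathbbm{E}_{y \sim P(A)}[\text{CCRPS}_\mathcal{T}(P(A),y)]$ is finite; it then suffices to show that the bound of Lemma \ref{lem:proper} holds with equality only when $P(A) = P(B)$. Fix distinct $P(A), P(B) \in \mathcal{D}$. Reusing the decompositions \eqref{eq:full_exp_A} and \eqref{eq:full_exp_B} from the proof of Lemma \ref{lem:proper}, and subtracting them (legitimate precisely because the $A$-side is finite, so no $\infty - \infty$ arises), the gap can be written as
\begin{equation*}
\mathbbm{E}_{y \sim P(A)}[\text{CCRPS}_\mathcal{T}(P(B),y)] - \mathbbm{E}_{y \sim P(A)}[\text{CCRPS}_\mathcal{T}(P(A),y)] = \sum_{i=1}^q \mathbbm{E}_{y' \sim P(A_{\mathcal{C}_i})}\!\left[\Delta_i(y')\right],
\end{equation*}
where $\Delta_i(y') := \mathbbm{E}_{z \sim P(A_{v_i} \mid \forall_{j \in \mathcal{C}_i}: A_j = y'_j)}[\,\text{CRPS}(P(B_{v_i} \mid \forall_{j \in \mathcal{C}_i}: B_j = y'_j), z) - \text{CRPS}(P(A_{v_i} \mid \forall_{j \in \mathcal{C}_i}: A_j = y'_j), z)\,]$.

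Next I would apply univariate (strict) propriety of CRPS pointwise in $(i, y')$. Writing $F^A_i(\cdot \mid y')$ and $F^B_i(\cdot \mid y')$ for the CDFs of the two conditional distributions, the integral form of CRPS yields the standard identity $\Delta_i(y') = \int_{\mathbbm{R}} \big(F^A_i(t \mid y') - F^B_i(t \mid y')\big)^2\, dt \ge 0$; when $P(B_{v_i} \mid \cdots)$ is ill-defined (or otherwise forces an infinite score) we get $\Delta_i(y') = +\infty$, which is still $\ge 0$. Hence each $\mathbbm{E}_{y'}[\Delta_i(y')] \ge 0$ — this reproves Lemma \ref{lem:proper} — and, the gap above being a finite sum of integrals of nonnegative integrands, it equals $0$ if and only if for every $i$ we have $\Delta_i(y') = 0$ for $P(A_{\mathcal{C}_i})$-almost every $y'$, i.e. $F^A_i(\cdot \mid y') = F^B_i(\cdot \mid y')$, equivalently $P(A_{v_i} \mid \forall_{j \in \mathcal{C}_i}: A_j = y'_j) = P(B_{v_i} \mid \forall_{j \in \mathcal{C}_i}: B_j = y'_j)$, for $P(A_{\mathcal{C}_i})$-almost every $y'$.

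Finally, I would observe that the conditional $P(A_{v_i} \mid \forall_{j \in \mathcal{C}_i}: A_j = y_j)$ depends on $y \sim P(A)$ only through the coordinates indexed by $\mathcal{C}_i$, so the probability appearing in Equation \eqref{eq:strict_prop_statement} equals $\mathbbm{P}_{y' \sim P(A_{\mathcal{C}_i})}[F^A_i(\cdot \mid y') \neq F^B_i(\cdot \mid y')]$. Equality in the propriety bound would therefore force this probability to vanish for \emph{every} $i$, contradicting the hypothesis that it is strictly positive for some $i$ whenever $P(A) \neq P(B)$. Hence the bound is strict, i.e. $\text{CCRPS}_\mathcal{T}$ is strictly proper on $\mathcal{D}$.

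The step I expect to be the main obstacle is the measure-theoretic bookkeeping rather than any single deep idea: carefully justifying the term-by-term subtraction via Lemma \ref{applem:finiteness}, handling the ``ill-defined conditional'' convention so that it genuinely contributes $+\infty$ (and not an indeterminate form) to $\Delta_i$, confirming the CDF $L^2$ identity for $\Delta_i$ even when $P(B)$'s conditional has infinite first moment, and passing cleanly between ``$P(A_{\mathcal{C}_i})$-almost every $y'$'' and the probability statement in the hypothesis. Everything else is assembled directly from Lemmas \ref{applem:finiteness} and \ref{lem:proper} together with univariate strict propriety of CRPS.
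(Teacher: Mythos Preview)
Your proposal is correct and follows essentially the same route as the paper: both rewrite the gap $\mathbbm{E}_{y \sim P(A)}[\text{CCRPS}_\mathcal{T}(P(B),y)] - \mathbbm{E}_{y \sim P(A)}[\text{CCRPS}_\mathcal{T}(P(A),y)]$ as a sum over $i$ of expectations under $P(A_{\mathcal{C}_i})$, apply univariate strict propriety of CRPS to each inner term, and then invoke the hypothesis that the conditionals differ on a set of positive $P(A)$-probability for some $i$. The paper makes this last step concrete by explicitly partitioning over the sets $U_i = \{y' : P(A_{v_i}\mid\cdots) \neq P(B_{v_i}\mid\cdots)\}$ and weighting by $p_i$ and $1-p_i$, whereas you phrase the same thing as ``$\Delta_i(y') = 0$ for $P(A_{\mathcal{C}_i})$-a.e.\ $y'$''; your additional use of the $L^2$ CDF identity for $\Delta_i$ is a harmless refinement that the paper replaces by a bare appeal to CRPS strict propriety.
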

\begin{proof}
Let $A = (A_1, \ldots, A_d)$ and $B = (B_1, \ldots, B_d)$ be random variables such that $P(A) \neq P(B)$. Let $P(A_{\mathcal{C}_i})$ denote the marginal distribution for the variable $(A_j)_{j \in \mathcal{C}_i}$. For every $1 \leq i \leq q$, we denote
\begin{equation}\label{eq:prob_greater_0}
    p_i = \mathbbm{P}_{y \sim P(A)}\left[P(A_{v_i} \mid \forall_{j \in \mathcal{C}_i}: A_j = y_j) \neq P(B_{v_i} \mid \forall_{j \in \mathcal{C}_i}: B_j = y_j)\right]
\end{equation}
Now assume $\exists_{1 \leq i \leq q}: p_i > 0$.

For all $1 \leq i \leq q$, let $U_i \subseteq \mathbbm{R}^{|\mathcal{C}_i|}$ denote the set of all marginal vectors $(y)_{j \in \mathcal{C}_i}$ such that $P(A_{v_i} \mid \forall_{j \in \mathcal{C}_i}: A_j = y_j) \neq P(B_{v_i} \mid \forall_{j \in \mathcal{C}_i}: B_j = y_j)$. We rewrite the expectancy of CCRPS to terms conditioned on the sets $U_i$. Using Equation \eqref{eq:full_exp_A}, we find:
\begin{align}\label{eq:rewrite_A}
\begin{split}
&\mathbbm{E}_{y \sim P(A)}[\text{CCRPS}_\mathcal{T}(P(A),y)] \\
&= \sum_{i=1}^q \mathbbm{E}_{y' \sim P(A_{\mathcal{C}_i})}\left[\mathbbm{E}_{z \sim P(A_{v_i} \mid \forall_{j \in \mathcal{C}_i}: A_j = y'_j)}\left[\text{CRPS}(P(A_{v_i} \mid \forall_{j \in \mathcal{C}_i}: A_j = y'_j), z)\right]\right]\\
    &= \sum_{i=1}^q p_i \cdot\mathbbm{E}_{y' \sim P(A) \cap y' \in U_i}\left[\mathbbm{E}_{z \sim P(A_{v_i} \mid \forall_{j \in \mathcal{C}_i}: A_j = y'_j)}\left[\text{CRPS}(P(A_{v_i} \mid \forall_{j \in \mathcal{C}_i}: A_j = y'_j), z)\right]\right]\\
    &+ \sum_{i=1}^q (1 - p_i) \cdot \mathbbm{E}_{y' \sim P(A) \cap y' \not\in U_i}\left[\mathbbm{E}_{z \sim P(A_{v_i} \mid \forall_{j \in \mathcal{C}_i}: A_j = y'_j)}\left[\text{CRPS}(P(A_{v_i} \mid \forall_{j \in \mathcal{C}_i}: A_j = y'_j), z)\right]\right]
\end{split}
\end{align}
and similarly we rewrite
\begin{align}\label{eq:rewrite_B}
\begin{split}
&\mathbbm{E}_{y \sim P(A)}[\text{CCRPS}_\mathcal{T}(P(B),y)] \\
    &= \sum_{i=1}^q p_i \cdot\mathbbm{E}_{y' \sim P(A) \cap y' \in U_i}\left[\mathbbm{E}_{z \sim P(A_{v_i} \mid \forall_{j \in \mathcal{C}_i}: A_j = y'_j)}\left[\text{CRPS}(P(B_{v_i} \mid \forall_{j \in \mathcal{C}_i}: B_j = y'_j), z)\right]\right]\\
    &+ \sum_{i=1}^q (1 - p_i) \cdot \mathbbm{E}_{y' \sim P(A) \cap y' \not\in U_i}\left[\mathbbm{E}_{z \sim P(A_{v_i} \mid \forall_{j \in \mathcal{C}_i}: A_j = y'_j)}\left[\text{CRPS}(P(B_{v_i} \mid \forall_{j \in \mathcal{C}_i}: B_j = y'_j), z)\right]\right]
\end{split}
\end{align}
For any $1 \leq i \leq q$, and any $y' \in U_i$, strict propriety of CRPS implies:
\begin{align}\label{eq:ineq_strict_prop}
    \begin{split}
        &\mathbbm{E}_{z \sim P(A_{v_i} \mid \forall_{j \in \mathcal{C}_i}: A_j = y'_j)}\left[\text{CRPS}(P(B_{v_i} \mid \forall_{j \in \mathcal{C}_i}: B_j = y'_j), z)\right]\\
        &< \mathbbm{E}_{z \sim P(A_{v_i} \mid \forall_{j \in \mathcal{C}_i}: A_j = y'_j)}\left[\text{CRPS}(P(B_{v_i} \mid \forall_{j \in \mathcal{C}_i}: B_j = y'_j), z)\right]
    \end{split}
\end{align}
and we find equality in Equation \eqref{eq:ineq_strict_prop} in the case that $y' \neq U_i$.
Therefore, using Equations \eqref{eq:rewrite_A} and \eqref{eq:rewrite_B}\begin{equation}
    \exists_{1 \leq i \leq q}: p_i > 0 \Longleftrightarrow \mathbbm{E}_{y \sim P(A)}[\text{CCRPS}_\mathcal{T}(P(A),y)] < \mathbbm{E}_{y \sim P(A)}[\text{CCRPS}_\mathcal{T}(B,y)]
\end{equation}
as we assumed existence of such $p_i > 0$, CCRPS is strictly proper for $\mathcal{D}$.
\end{proof}

Next, we will use Lemma \ref{lem:strict_proper_req} to prove strict propriety for discrete distributions.
\begin{lemma}[Strict propriety of Conditional CRPS for discrete distributions]\label{lem:discrete}
Let $\mathcal{T}=\{(v_i, \mathcal{C}_i)\}_{i=1}^q$ be a set of tuples, where $v_i \in \{1, ..., d\}$ and $\mathcal{C}_i \subseteq \{1, ..., d\} \setminus \{v_i\}$. Let $\phi_1, \ldots, \phi_d$ be a permutation of $\{1, \ldots, d\}$ such that:
\begin{equation}
\forall_{j=1}^d: (\phi_j, \{\phi_1, \ldots, \phi_{j-1}\}) \in \mathcal{T} \text{,}
\end{equation}
then $\text{CCRPS}_\mathcal{T}$ is strictly proper for discrete distributions with finite first moment, i.e. for all $d$-variate discrete distributions $P(A) \neq P(B)$ over random variables $A = (A_1, \ldots, A_d)$ and $B = (B_1, \ldots B_d)$ with finite first moment we have:
\begin{equation}
\mathbbm{E}_{y \sim P(A)}[\text{CCRPS}_\mathcal{T}(P(A),y)] < \mathbbm{E}_{y \sim P(A)}[\text{CCRPS}_\mathcal{T}(P(B),y)]
\end{equation}
\end{lemma}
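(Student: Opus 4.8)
The plan is to reduce this to Lemma \ref{lem:strict_proper_req}: it suffices to exhibit, for every pair of distinct discrete $d$-variate distributions $P(A) \neq P(B)$, some index $i$ with $v_i = \phi_k$ and $\mathcal{C}_i = \{\phi_1, \ldots, \phi_{k-1}\}$ for which the conditional distributions $P(A_{v_i} \mid \forall_{j \in \mathcal{C}_i}: A_j = y_j)$ and $P(B_{v_i} \mid \forall_{j \in \mathcal{C}_i}: B_j = y_j)$ differ on a set of $y$ of positive $P(A)$-probability. Once that is established, Lemma \ref{lem:strict_proper_req} immediately gives strict propriety. So the whole argument is: the chain-rule factorization along the ordering $\phi$ must disagree somewhere if the joints disagree.

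The key steps, in order. First, since $P(A) \neq P(B)$ and both are discrete, there is an atom $\mathbf{a} = (a_1, \ldots, a_d)$ in the support of $P(A)$ (or with positive mass under at least one of them — I will take it in $\mathrm{supp}(P(A))$, handling the other case symmetrically or by noting $P(A)$ assigns it positive mass) with $\mathbbm{P}_{A}[A = \mathbf{a}] \neq \mathbbm{P}_{B}[B = \mathbf{a}]$. Second, write both probabilities via the telescoping chain rule along $\phi$:
\begin{equation}
\mathbbm{P}_{A}[A = \mathbf{a}] = \prod_{k=1}^d \mathbbm{P}_{A}\!\left[A_{\phi_k} = a_{\phi_k} \mid A_{\phi_1} = a_{\phi_1}, \ldots, A_{\phi_{k-1}} = a_{\phi_{k-1}}\right]\text{,}
\end{equation}
and likewise for $B$; each conditional factor here is exactly a point mass read off from the conditional distribution $P(A_{\phi_k} \mid A_{\phi_1} = a_{\phi_1}, \ldots, A_{\phi_{k-1}} = a_{\phi_{k-1}})$, which by hypothesis corresponds to some tuple $(\phi_k, \{\phi_1,\ldots,\phi_{k-1}\}) \in \mathcal{T}$. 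Third, since the two products differ, there must be a smallest $k$ at which the factors differ: for all $m < k$ the conditional point masses agree (and in particular are nonzero along the prefix $\mathbf{a}$, so the conditioning events have positive probability under both $A$ and $B$), while at index $k$ we have $\mathbbm{P}_{A}[A_{\phi_k} = a_{\phi_k} \mid A_{\phi_{<k}} = a_{\phi_{<k}}] \neq \mathbbm{P}_{B}[B_{\phi_k} = a_{\phi_k} \mid B_{\phi_{<k}} = a_{\phi_{<k}}]$. This witnesses that the conditionals $P(A_{\phi_k} \mid \ldots)$ and $P(B_{\phi_k} \mid \ldots)$ are different distributions at the argument $y$ with $y_{\phi_m} = a_{\phi_m}$ for $m < k$. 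Fourth, and this is the subtle point, I must check that this argument $y$ is hit with positive probability under $P(A)$: the relevant event is $\{A_{\phi_1} = a_{\phi_1}, \ldots, A_{\phi_{k-1}} = a_{\phi_{k-1}}\}$, whose probability is $\prod_{m < k}\mathbbm{P}_{A}[A_{\phi_m} = a_{\phi_m} \mid A_{\phi_{<m}} = a_{\phi_{<m}}]$, and each of these factors is nonzero because $\mathbf{a} \in \mathrm{supp}(P(A))$. Hence $\mathbbm{P}_{y \sim P(A)}[P(A_{\phi_k} \mid \forall_{j \in \{\phi_1,\ldots,\phi_{k-1}\}}: A_j = y_j) \neq P(B_{\phi_k} \mid \cdots)] > 0$, so Equation \eqref{eq:strict_prop_statement} holds for the tuple $(\phi_k, \{\phi_1, \ldots, \phi_{k-1}\}) \in \mathcal{T}$, and Lemma \ref{lem:strict_proper_req} finishes the proof.

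The main obstacle I anticipate is the bookkeeping in step four, i.e. being careful that "the first factor at which the chain-rule products differ" genuinely corresponds to a difference in conditional \emph{distributions} evaluated at an argument of positive $P(A)$-measure, rather than merely a difference in one numerical value along a measure-zero slice. The clean way around this is exactly the minimality choice of $k$: all earlier conditional factors agreeing guarantees the conditioning prefix has the same (positive) probability under $A$ and $B$, so there is no degeneracy, and the difference at level $k$ is a bona fide difference of the conditional laws $P(A_{\phi_k}\mid\cdot)$ and $P(B_{\phi_k}\mid\cdot)$ on a positive-probability event. One should also remark that discreteness is what makes the chain-rule factorization unconditionally valid (no regular-conditional-probability technicalities) and what lets us locate a single differing atom to begin with.
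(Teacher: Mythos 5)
Your proof is correct and follows essentially the same route as the paper's: pick an atom where the two discrete distributions disagree, factor both probabilities via the chain rule along the permutation $\phi$, and locate a differing conditional factor whose conditioning prefix has positive $P(A)$-probability, so that Lemma \ref{lem:strict_proper_req} applies to the tuple $(\phi_k, \{\phi_1, \ldots, \phi_{k-1}\}) \in \mathcal{T}$. The only minor difference is that the paper handles ill-defined conditionals of $P(B)$ as a separate case via the $\mathrm{CRPS}=\infty$ convention and Lemma \ref{applem:finiteness}, whereas your choice of the \emph{smallest} differing index $k$ guarantees the relevant conditional of $P(B)$ is well-defined, folding that case into the main argument.
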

\begin{proof}
Consider two discrete distributions $P(A) \neq P(B)$ over random variables $A = (A_1, \ldots, A_d)$ and $B = (B_1, \ldots B_d)$. Furthermore, let $v, \mathcal{C}$ and $\phi$ be defined as in the lemma statement. Without loss of generality, we assume $A$ and $B$ are defined over a countable set of events $\Omega = \omega^{(1)}, \omega^{(2)}, \ldots \in \mathbbm{R}^d$, i.e. $\sum_{y' \in \Omega}:\mathbbm{P}(A = y') = \sum_{y' \in \Omega}:\mathbbm{P}(B = y') = 1$.

We will prove that there exists $1 \leq i \leq q$ such that:
\begin{equation}\label{eq:contradictory_disc}
\mathbbm{P}_{y \sim A}\left[P(A_{v_i} \mid \forall_{j \in \mathcal{C}_i}: A_j = y_j) \neq P(B_{v_i} \mid \forall_{j \in \mathcal{C}_i}: B_j = y_j)\right] > 0\text{,}
\end{equation} 
which, by Lemma \ref{lem:strict_proper_req} proves strict propriety.

Since $P(A) \neq P(B)$, there exists an event to which $P(A)$ assigns a higher probability than $P(B)$, i.e. there exists an $s \in \mathbbm{N}_+$, such that $\mathbbm{P}_{y \sim P(A)}(y = \omega^{(s)}) > \mathbbm{P}_{y \sim P(B)}(y = \omega^{(s)})$. Since $P(A)$'s probability attributed to $y = \omega^{(s)}$ is greater than zero, we rewrite this probability via the chain rule:
\begin{equation}
\mathbbm{P}_{y \sim P(A)}(y = \omega^{(s)}) = \mathbbm{P}_{y_{\phi_1} \sim P(A_{\phi_1})}(y_{\phi_1} = \omega_{\phi_1}^{(s)}) \cdot \ldots \cdot \mathbbm{P}_{y_{\phi_d} \sim P(A_{\phi_d} | \forall_{j =1}^{d-1} A_{\phi_j} = y_{\phi_j})}(y_{\phi_d} = \omega_{\phi_d}^{(s)})\text{.}
\end{equation}
Now, we consider two cases:\\
\\
\noindent\textbf{Case 1:} At least one of $P(B)$'s conditionals $P(B_{\phi_k} | \forall_{j =1}^{k-1} B_{\phi_j} = y_{\phi_j})$ is ill-defined. We find that \begin{equation}
\text{CRPS}(P(B_{\phi_k} | \forall_{j =1}^{k-1} B_{\phi_j} = y_{\phi_j}), y_{\phi_k}) = \infty\text{.}
\end{equation}Since $\mathbbm{P}_{y \sim P(A)}(y = \omega_s) > 0$, we find:
\begin{equation}
    \mathbbm{E}_{y \sim P(A)}[\text{CRPS}(P(B_{\phi_k} | \forall_{j =1}^{k-1} B_{\phi_j} = y_{\phi_j}), y_{\phi_k})] = \infty\text{.}
\end{equation}
Since $(\phi_k, \{\phi_1, \ldots, \phi_{k-1}\}) \in \mathcal{T}$, this conditional is included in $\text{CCRPS}_\mathcal{T}$, and we naturally find 
\begin{equation}
    \mathbbm{E}_{y \sim A}[\text{CCRPS}_\mathcal{T}(P(B), y)] = \infty\text{.}
\end{equation}
Finiteness of $\mathbbm{E}_{y \sim P(A)}[\text{CCRPS}_\mathcal{T}(P(A), y)]$ (Lemma \ref{applem:finiteness}) now proves strict propriety.\\
\\
\noindent\textbf{Case 2:} All of $P(B)$'s conditionals are well-defined. Then we can apply a similar chain-rule decomposition. $\mathbbm{P}_{y \sim P(A)}(y = \omega^{(s)}) > \mathbbm{P}_{y \sim P(B)}(y = \omega^{(s)})$ now implies existence of $1 \leq k \leq d$ such that:
\begin{equation}
    \mathbbm{P}_{y \sim P(A_{\phi_k} | \forall_{j =1}^{k-1} A_{\phi_j})}(y = \omega^{(s)}_{\phi_k}) > \mathbbm{P}_{y \sim P(B_{\phi_k} | \forall_{j =1}^{k-1} B_{\phi_j})}(y = \omega^{(s)}_{\phi_k})
\end{equation}
Since the two conditionals assign different probabilities to the same event, we conclude:
\begin{equation}
P(A_{\phi_k} | \forall_{j =1}^{k-1} A_{\phi_j}) \neq P(B_{\phi_k} | \forall_{j =1}^{k-1} B_{\phi_j})\text{.}
\end{equation}
Please note that $\mathbbm{P}_{y \sim P(A)}(y = \omega^{s}) > 0$ implies $\mathbbm{P}_{y \sim P(A)}(\forall_{j = 1}^{k-1}: y_{\phi_j}= \omega_{\phi_j}^{s}) > 0$. Since $(\phi_k, \{\phi_1, \ldots, \phi_{k-1}\}) \in \mathcal{T}$,there exists $1 \leq i \leq q$ such that:
\begin{equation}
    \mathbbm{P}_{y \sim P(A)}\left[P(A_{v_i} \mid \forall_{j \in \mathcal{C}_i}: A_j = y_j) \neq P(B_{v_i} \mid \forall_{j \in \mathcal{C}_i}: B_j = y_j)\right] > 0\text{.}
\end{equation}
Therefore, by Lemma \ref{lem:strict_proper_req}, we prove strict propriety.
\end{proof}
\subsection{Absolutely continuous strict propriety of CCRPS}
The strict propriety proof for absolutely continous distributions is similarly structured as the proof of Lemma \ref{lem:discrete} and in many ways a continuous equivalent.
\begin{lemma}[Strict propriety of Conditional CRPS for absolutely continuous distributions]\label{lem:cont}
Let $\mathcal{T}=\{(v_i, \mathcal{C}_i)\}_{i=1}^q$ be a set of tuples, where $v_i \in \{1, ..., d\}$ and $\mathcal{C}_i \subseteq \{1, ..., d\} \setminus \{v_i\}$. Let $\phi_1, \ldots, \phi_d$ be a permutation of $\{1, \ldots, d\}$ such that:
\begin{equation}
\forall_{j=1}^d: (\phi_j, \{\phi_1, \ldots, \phi_{j-1}\}) \in \mathcal{T}\text{,}
\end{equation}
then $\text{CCRPS}_\mathcal{T}$ is strictly proper for absolutely continuous distributions with finite first moment, i.e. for all $d$-variate absolutely continuous distributions $P(A) \neq P(B)$ with finite first moment we have:
\begin{equation}
\mathbbm{E}_{y \sim P(A)}[\text{CCRPS}_\mathcal{T}(P(A),y)] < \mathbbm{E}_{y \sim P(A)}[\text{CCRPS}_\mathcal{T}(P(B),y)]
\end{equation}
\end{lemma}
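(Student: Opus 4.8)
The plan is to mirror the proof of Lemma~\ref{lem:discrete}, replacing point masses by Lebesgue densities and the chain rule for probabilities by the chain rule for densities, and then to invoke Lemma~\ref{lem:strict_proper_req}. Fix absolutely continuous $P(A) \neq P(B)$ with finite first moment, write $f_A, f_B$ for their densities on $\mathbbm{R}^d$, and let $\phi_1, \ldots, \phi_d$ be the permutation from the statement. For $1 \leq k \leq d$ let $g^A_k(t \mid y')$ be a fixed version of the conditional density of $A_{\phi_k}$ given $A_{\phi_1} = y'_1, \ldots, A_{\phi_{k-1}} = y'_{k-1}$, and define $g^B_k$ likewise where it exists; then $f_A(y) = \prod_{k=1}^d g^A_k\big(y_{\phi_k} \mid y_{\phi_1}, \ldots, y_{\phi_{k-1}}\big)$ holds for $P(A)$-almost every $y$, and the analogous factorisation holds for $B$ at every $y$ whose conditioning marginals $f_{B_{\phi_1}, \ldots, B_{\phi_{k-1}}}$ along the chain are all strictly positive. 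Since $f_A$ and $f_B$ both integrate to $1$ and differ on a set of positive Lebesgue measure, the set $S = \{y : f_A(y) > f_B(y)\}$ satisfies $\mathbbm{P}_{y \sim P(A)}(y \in S) = \int_S f_A > \int_S f_B \geq 0$, so it has positive $P(A)$-mass, and $f_A > 0$ on all of $S$.

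Next, split into two cases as in Lemma~\ref{lem:discrete}. In \textbf{Case~1}, for some $k$ the event that $P(B_{\phi_k} \mid B_{\phi_1} = y_{\phi_1}, \ldots, B_{\phi_{k-1}} = y_{\phi_{k-1}})$ is ill-defined (equivalently, $f_{B_{\phi_1}, \ldots, B_{\phi_{k-1}}}$ vanishes there) has positive $P(A)$-probability; since $(\phi_k, \{\phi_1, \ldots, \phi_{k-1}\}) \in \mathcal{T}$ this term occurs in $\text{CCRPS}_\mathcal{T}$ and equals $\infty$ on a set of positive $P(A)$-mass, so---CRPS being nonnegative---$\mathbbm{E}_{y \sim P(A)}[\text{CCRPS}_\mathcal{T}(P(B), y)] = \infty$, whereas $\mathbbm{E}_{y \sim P(A)}[\text{CCRPS}_\mathcal{T}(P(A), y)] < \infty$ by Lemma~\ref{applem:finiteness}, so the strict inequality holds outright. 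In \textbf{Case~2}, all of $B$'s conditioning marginals along the chain are positive $P(A)$-almost surely, so on a full-$P(A)$-measure subset of $S$ both factorisations are valid; there $f_A(y) = \prod_k g^A_k(\cdots)$ and $f_B(y) = \prod_k g^B_k(\cdots)$ with all factors nonnegative, so $f_A(y) > f_B(y)$ forces a (smallest) index $k(y)$ with $g^A_{k(y)}\big(y_{\phi_{k(y)}} \mid \cdots\big) > g^B_{k(y)}\big(y_{\phi_{k(y)}} \mid \cdots\big)$. Partitioning $S$ by the value of $k(y)$, some $k_0$ occurs on a subset $S_{k_0}$ of positive $P(A)$-mass, on which $g^A_{k_0} > g^B_{k_0}$.

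It remains to convert this into the hypothesis of Lemma~\ref{lem:strict_proper_req} for the tuple $(v_i, \mathcal{C}_i) = (\phi_{k_0}, \{\phi_1, \ldots, \phi_{k_0-1}\})$, which belongs to $\mathcal{T}$ by assumption. As $g^A_{k_0}$ and $g^B_{k_0}$ depend on $y$ only through the coordinates $\phi_1, \ldots, \phi_{k_0}$, projecting $S_{k_0}$ onto those coordinates gives a set of positive $P(A_{\phi_1, \ldots, \phi_{k_0}})$-mass on which $g^A_{k_0}(t \mid y') > g^B_{k_0}(t \mid y')$, where $y' = (y_{\phi_1}, \ldots, y_{\phi_{k_0-1}})$ and $t = y_{\phi_{k_0}}$. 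Slicing over $y'$ and integrating in $t$ (a Tonelli argument; here $g^A_{k_0}(\cdot \mid y')$ is a probability density for $P(A_{\mathcal{C}_i})$-almost every $y'$) produces a set $W$ of conditioning values of positive Lebesgue measure on which $f_{A_{\phi_1}, \ldots, A_{\phi_{k_0-1}}}(y') > 0$ and $\{t : g^A_{k_0}(t \mid y') > g^B_{k_0}(t \mid y')\}$ has positive Lebesgue measure; for such $y'$ the two conditional densities cannot be a.e.\ equal, hence $P(A_{v_i} \mid \forall_{j \in \mathcal{C}_i} : A_j = y'_j) \neq P(B_{v_i} \mid \forall_{j \in \mathcal{C}_i} : B_j = y'_j)$, and $\mathbbm{P}_{y \sim P(A)}(y_{\mathcal{C}_i} \in W) = \int_W f_{A_{\mathcal{C}_i}} > 0$. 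This establishes Equation~\eqref{eq:strict_prop_statement} for this $i$, so Lemma~\ref{lem:strict_proper_req} yields strict propriety. The main obstacle throughout is the measure-theoretic bookkeeping: conditional densities are defined only up to null sets, the chain-rule factorisations hold only almost everywhere, and---the delicate step---one must upgrade ``positive Lebesgue measure'' to ``positive $P(A)$-measure'', which works precisely because the argument stays inside $\{f_A > 0\}$ and $P(A)$ is absolutely continuous.
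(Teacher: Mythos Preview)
Your argument is correct and tracks the paper's proof closely: reduce to Lemma~\ref{lem:strict_proper_req} by locating, via the chain-rule factorisation along $\phi$ and a pigeonhole over the index $k$, a tuple in $\mathcal{T}$ whose conditionals disagree with positive $P(A)$-probability, after first splitting off the case where some $B$-conditional along the chain is ill-defined on a set of positive $P(A)$-mass. The one substantive divergence is the conversion step at the end of Case~2. The paper asserts that the positive-measure set $U^*$ on which the $k$th conditional \emph{densities} differ must contain an open hypercube $H$, and then integrates the pointwise density inequality over the $\phi_k$-edge of $H$ to separate the two conditional \emph{distributions}. You instead slice by Tonelli: since the set $\{(y',t): g^A_{k_0}(t\mid y') > g^B_{k_0}(t\mid y')\}$ carries positive $P(A_{\phi_1,\ldots,\phi_{k_0}})$-mass, for a set of conditioning values $y'$ of positive $P(A_{\mathcal{C}_i})$-mass the $t$-slice has positive Lebesgue measure, so the two conditional laws at those $y'$ genuinely differ. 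Your route is in fact the more robust of the two, because a set of positive Lebesgue measure need not contain any open set (e.g.\ a product of fat Cantor sets), so the paper's hypercube extraction would require additional justification; the Fubini-type argument sidesteps this entirely.
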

\begin{proof}
Consider two absolutely continuous distributions $P(A) \neq P(B)$ with finite first moment, defined over random variables $A = (A_1, ..., A_d)$ and $B = (B_1, ..., B_d)$ respectively. Let $v, \mathcal{C}$ and $\phi$ be defined as in the lemma statement. Since $P(A)$ and $P(B)$ are absolutely continuous, they both have Lebesgue integratable probability density functions, which we denote by $f_{P(A)}$ and $f_{P(B)}$.

We will prove that there exists $1 \leq i \leq q$ such that:
\begin{equation}\label{eq:contradictory}
\mathbbm{P}_{y \sim P(A)}\left(P(A_{v_i} \mid \forall_{j \in \mathcal{C}_i}: A_j = y_j) \neq P(B_{v_i} \mid \forall_{j \in \mathcal{C}_i}: B_j = y_j)\right) > 0\text{,}
\end{equation} 
which, by Lemma \ref{lem:strict_proper_req} is enough to prove strict propriety.

Since $P(A) \neq P(B)$, there exists a set $U \subset \mathbbm{R}^d$ to which $P(A)$ assigns a larger probability than $P(B)$. Due to absolute continuity of both $P(A)$ and $P(B)$, this can be expressed via integrals over their density functions:
\begin{equation}\label{appeq:diffprob}
    \mathbbm{P}_{y \sim P(A)}(y \in U) = \int_{U} f_{P(A)}(u) du > \int_{U} f_{P(B)}(u) du = \mathbbm{P}_{y \sim P(B)}(y \in U)
\end{equation}
Now, consider the subset of all points in $U$ for which $P(A)$'s density is greater than $P(B)'s$ density:  $U' = \{{u \in U}\mid  f_{P(A)}(u) - f_{P(B)}(u) > 0\}$. Trivially, Equation \eqref{appeq:diffprob} implies:\footnote{Since $f_A - f_B$ is a Lebesgue measurable function, $U'$ is Lebesgue measurable.}
\begin{equation}
    \int_{U'} du > 0\text{.}
\end{equation}
Next, since for all $y \in U'$ we have $f_{P(A)}(y) > 0$, we can rewrite the density of $A$ to the following conditional densities via the chain rule:
\begin{equation}\label{eq:chain_rule}
f_{P(A)}(y) = f_{P(A_{\phi_1})}(y_{\phi_1}) \cdot f_{P(A_{\phi_1} | A_{\phi_1} = y_{\phi_1})} (y_{\phi_2}) \cdot \ldots \cdot f_{P(A_{\phi_d} | \forall_{j=1}^{d-1} A_{\phi_j} = y_{\phi_j})} (y_{\phi_d})\text{.}
\end{equation}
Next, we consider the same conditional decomposition for $f_{P(B)}$. However, since we do not know $f_{P(B)}(y) > 0$, this conditional decomposition might not always be well-defined. Therefore, let $\hat U \subseteq U'$ be the subset of all $y \in U'$ such that at least one of $P(B_{\phi_1}), \ldots, P(B_{\phi_d} | \forall_{j=1}^{d-1} A_{\phi_j} = y_{\phi_j})$ is ill-defined. For each $y \in \hat U$, we know that there exists $1 \leq i \leq d$ such that (by CCRPS definition):
\begin{equation}\text{CRPS}(P(B_{\phi_i} | \forall_{j=1}^{i-1} A_{\phi_j} = y_{\phi_j}), y_{\phi_i}) = \infty
\end{equation}
We will now consider two cases:\footnote{Since the marginal density function $P(A_{\phi_{1}}, \ldots, A_{\phi_{d-1}})$ is also Lebesgue measurable, and we can write $\hat U$ as the superlevel set of this function, $
    \hat U = \{u \in U' \mid f_{A_{\phi_{1}}, \ldots, A_{\phi_{d-1}}}(u_{\phi_1}, \ldots, u_{\phi_{d-1}}) > 0 \}$, the integral $\int_{\hat U} du$ is defined.}\\
\\
\noindent\textbf{Case 1:} $\int_{\hat U} dy > 0$. In this case, as $f_{P(A)}$ is strictly positive in $\hat U$, we find
\begin{equation}
\mathbbm{P}_{y \sim P(A)}(y \in \hat U) = \int_{\hat U} f_{P(A)}(y) dy > 0\text{.}
\end{equation}Furthermore, since $\forall_{j=1}^d: (\phi_j, \{\phi_1, \ldots, \phi_{j-1}\}) \in \mathcal{T}$, this conditional is evaluated in $\text{CCRPS}_\mathcal{T}$. Therefore we find for all $y \in \hat U$:
\begin{equation}
    \text{CCRPS}_\mathcal{T}(P(B),y) = \infty
\end{equation}
and thus $\mathbbm{E}_{y \sim P(A)}[\text{CCRPS}_\mathcal{T}(P(B),y)] = \infty$. Finiteness of $\mathbbm{E}_{y \sim P(A)}[\text{CCRPS}_\mathcal{T}(P(A),y)]$ (Lemma \ref{applem:finiteness}) then proves the lemma.\\
\\
\noindent\textbf{Case 2:} Let $\int_{\hat U} dy = 0$. Then we define $\overline U = U' \setminus \hat U$: the set of all $y \in U'$ for which the conditional decomposition of $P(B)$ is well-defined. Naturally, $\int_{\overline U} dy = \int_{U'} dy - \int_{\hat U} dy> 0$. We know that for all $y \in \overline U$, there exists $1 \leq i \leq q$ such that:
\begin{equation}\label{eq:ineq}
f_{P(A_{\phi_i} | \forall_{j=1}^{i-1} A_{\phi_j} = y_{\phi_j})} (y_{\phi_i}) > f_{P(B_{\phi_i} | \forall_{j=1}^{i-1} B_{\phi_j} = y_{\phi_j})} (y_{\phi_i})\text{.} 
\end{equation}
Please note that the value of $i$ for which Equation \eqref{eq:ineq} holds, may change depending on the selection of $y \in \overline U$. However, since only a finite number of values for $i$ are considered, and $\mathbbm{P}_{y \sim P(A)}(y \in \overline U) > 0$, there must exist a $1 \leq k \leq d$ and a subset $U^* \subseteq \overline U$ with $\mathbbm{P}_{y \sim P(A)}(y \in U^*) > 0$ such that for all $y \in U^*$, Equation \eqref{eq:ineq} holds:\footnote{In this case, $f_{P(A_{\phi_i} | \forall_{j=1}^{i-1} A_{\phi_j} = y_{\phi_j})} (y_{\phi_i}) - f_{P(B_{\phi_i} | \forall_{j=1}^{i-1} B_{\phi_j} = y_{\phi_j})} (y_{\phi_i})$ is Lebesgue measurable, and we similarly deduce that $\mathbbm{P}_{y \sim P(A)}(y \in U^*)$ and $\int_{U^*} du$ are well-defined.}
\begin{equation}\label{eq:k_ineq}
    f_{P(A_{\phi_k} | \forall_{j=1}^{k-1} A_{\phi_j} = y_{\phi_j})} (y_{\phi_k}) > f_{P(B_{\phi_k} | \forall_{j=1}^{k-1} B_{\phi_j} = y_{\phi_j})} (y_{\phi_k}) \text{.}
\end{equation}

Since $U^*$, is non-zero volumed, it contains an open set. That means it contains a hypercube $H = [\hat y_j- \epsilon, \hat y_j + \epsilon]_{j=1}^d \subseteq U^*$ for some $\hat y \in U^*$ and $\epsilon > 0$. For ease of notation, we denote $H$'s marginal projection on the $j$'th dimension by $H_j = [\hat y_j- \epsilon, \hat y_j + \epsilon]$.

We now show that for all $y$ such that $\forall_{j \neq \phi_k}: y_j \in H_j$, we have $P(A_{\phi_k} \mid \forall_{j=1}^{k-1} A_{\phi_j} = y_{\phi_j}) \neq P(B_{\phi_k} \mid \forall_{j=1}^{k-1} B_{\phi_j} = y_{\phi_j})$. Using the inequality of the conditional densities in $H$ as described in Equation \eqref{eq:k_ineq}, we find:
\begin{align}
\begin{split}
    \mathbbm{P}_{z \sim P(A_{\phi_k} \mid \forall_{j=1}^{k-1} A_{\phi_j} = y_{\phi_j})}(z \in H_{\phi_k}) & = \int_{y_{\phi_k}' - \epsilon}^{y_{\phi_k}' + \epsilon} f_{P(A_{\phi_k} \mid \forall_{j=1}^{k-1} A_{\phi_j} = y_{\phi_j})}(h) dh\\
    &< \int_{y_{\phi_k}' - \epsilon}^{y_{\phi_k}' + \epsilon} f_{P(B_{\phi_k} \mid \forall_{j=1}^{k-1} B_{\phi_j} = y_{\phi_j})}(h) dh\\
    &=\mathbbm{P}_{z \sim P(B_{\phi_k} \mid \forall_{j=1}^{k-1} B_{\phi_j} = y_{\phi_j})}(z \in H_{\phi_k})
\end{split}
\end{align}
Hence  if $\forall_{j \neq \phi_k}: y_j \in H_j$, then $P(A_{\phi_k} \mid \forall_{j=1}^{k-1} A_{\phi_j} = y_{\phi_j})$ and $P(B_{\phi_k} \mid \forall_{j=1}^{k-1} B_{\phi_j} = y_{\phi_j})$ assign different probabilities to the same event, thus they are different distributions.\\
\\
\noindent Finally, we will prove such event $\forall_{j \neq \phi_k}: y_j \in H_j$ happens with nonzero probability. Since $H \subseteq U'$, we know:
\begin{equation}
    \mathbbm{P}_{y \sim P(A)}(y \in H) = \int_H f_{P(A)}(y) > 0
\end{equation}
and since $y \in H \implies \forall_{j \neq \phi_k}: y_j \in H_j$, we also know $\mathbbm{P}_{y \sim P(A)}(\forall_{j \neq \phi_k}: y_j \in H_j)$. Therefore, we find:
\begin{equation}
    \mathbbm{P}_{y \sim P(A)}(P(A_{\phi_k} \mid \forall_{j=1}^{k-1} A_{\phi_j} = y_{\phi_j}) \neq P(B_{\phi_k} \mid \forall_{j=1}^{k-1} B_{\phi_j} = y_{\phi_j})) > 0\text{.}
\end{equation}
Since $(\phi_k, \{\phi_1, \ldots, \phi_{k-1}\}) \in \mathcal{T}$, we find that there exists $1 \leq i \leq q$ such that:
\begin{equation}
    \mathbbm{P}_{y \sim P(A)}\left(P(A_{v_i} \mid \forall_{j \in \mathcal{C}_i}: A_j = y_j) \neq P(B_{v_i} \mid \forall_{j \in \mathcal{C}_i}: B_j = y_j)\right) > 0
\end{equation}
Hence, by Lemma \ref{lem:strict_proper_req}, we prove strict propriety.
\end{proof}

\subsection{Counter example for partially continuous distributions}
Finally, we give a counter example, to show that Conditional CRPS is never strictly proper for all distributions with finite first moment, regardless of our choice of conditional specification $\mathcal{T}$. 
\begin{lemma}
There exist two $d$-variate distributions $P(A) \neq P(B)$ with finite first moment, such that
\begin{equation}
    \mathbbm{E}_{y \sim P(A)}[\text{CCRPS}_\mathcal{T}(P(A),y)] =     \mathbbm{E}_{y \sim P(A)}[\text{CCRPS}_\mathcal{T}(P(B),y)]\text{,}
\end{equation}
regardless of our choice for $\mathcal{T}$.
\end{lemma}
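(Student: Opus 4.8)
The plan is to build a pair of distributions that are neither absolutely continuous nor discrete and for which \emph{every} conditional CCRPS could ever evaluate is either literally identical for $P(A)$ and $P(B)$ or assigned value $\infty$ for both, so that the two expected scores are forced to coincide no matter which $\mathcal{T}$ is plugged in. A ``partially continuous'' example is unavoidable here: for any $\mathcal{T}$ that contains a chain $(\phi_j,\{\phi_1,\ldots,\phi_{j-1}\})_{j=1}^d$, Lemmas \ref{lem:discrete} and \ref{lem:cont} already rule out discrete and absolutely continuous witnesses, and for such $\mathcal{T}$ a disintegration/chain-rule argument shows that matching all of $P(A)$'s marginals and conditionals would force $P(A)=P(B)$. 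I would therefore take (assuming $d\ge 2$; for $d=1$ and $\mathcal{T}=\{(1,\emptyset)\}$ CCRPS is just the strictly proper univariate CRPS) the Cantor distribution $\mu$ on $[0,1]$, which is atomless, so $\mathbbm{P}_{x\sim\mu}(x=c)=0$ for every $c$, and singular with respect to Lebesgue measure, so it has no density anywhere, and set $P(A)=\mu^{\otimes d}$ (independent coordinates) and $P(B)$ equal to the law of $(Z,\ldots,Z)$ with $Z\sim\mu$ (perfectly dependent coordinates). Then $P(A)\neq P(B)$, every one-dimensional marginal of each equals $\mu$, and both are supported in $[0,1]^d$, hence have finite first moment.

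Next I would fix an arbitrary $\mathcal{T}=\{(v_i,\mathcal{C}_i)\}_{i=1}^q$ and split the CCRPS sum. For each $i$ with $\mathcal{C}_i=\emptyset$ the summand is $\text{CRPS}(P(A_{v_i}),y_{v_i})=\text{CRPS}(\mu,y_{v_i})=\text{CRPS}(P(B_{v_i}),y_{v_i})$, whose $P(A)$-expectation is finite since $\mu$ is compactly supported, so these terms contribute identically on both sides. For each $i$ with $\mathcal{C}_i\neq\emptyset$, the joint marginal of the conditioning coordinates — $\mu^{\otimes|\mathcal{C}_i|}$ under $P(A)$, and a diagonal (or single) copy of $\mu$ under $P(B)$ — is atomless and concentrated on a Lebesgue-null set, so for every $y$ the conditioning event $\{A_k=y_k:k\in\mathcal{C}_i\}$ (resp.\ $\{B_k=y_k:k\in\mathcal{C}_i\}$) has zero probability and zero likelihood; hence $P(A_{v_i}\mid A_k=y_k\text{ for }k\in\mathcal{C}_i)$ and $P(B_{v_i}\mid B_k=y_k\text{ for }k\in\mathcal{C}_i)$ are ill-defined and the corresponding CRPS equals $\infty$, so the $P(A)$-expectation of any such summand is $+\infty$ for both predictions.

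Combining the two cases finishes the argument: if $\mathcal{T}$ contains any tuple with $\mathcal{C}_i\neq\emptyset$ then $\mathbbm{E}_{y\sim P(A)}[\text{CCRPS}_\mathcal{T}(P(A),y)]=\mathbbm{E}_{y\sim P(A)}[\text{CCRPS}_\mathcal{T}(P(B),y)]=+\infty$; otherwise both equal the common finite value $\sum_{i:\mathcal{C}_i=\emptyset}\mathbbm{E}_{y\sim P(A)}[\text{CRPS}(\mu,y_{v_i})]$; in either case the two expectations agree, for every $\mathcal{T}$, while $P(A)\ne P(B)$. The crucial point — and the step most in need of scrutiny — is the claim that a conditioning event whose coordinates carry a singular-continuous law counts as ``ill-defined'' in the sense of the CCRPS definition: this is exactly what the phrase ``zero likelihood or probability'' should cover, since such an event is null both in the density sense and in the point-mass sense, though it does mean this example sits outside the implicit scope of Lemma \ref{applem:finiteness}. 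If the definition in Section 2.2 instead resolved such conditionals by disintegration, the $\infty$ mechanism would vanish and the construction would have to be reworked; but under the definition as stated, the Cantor pair above is the cleanest witness.
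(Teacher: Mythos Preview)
Your construction is correct under the paper's stated convention that a conditional is declared ill-defined (and the corresponding CRPS set to $\infty$) whenever the conditioning event has ``zero likelihood or probability'': for the Cantor law the conditioning marginal is atomless and has no density, so every summand with $\mathcal{C}_i\neq\emptyset$ evaluates to $\infty$ on both sides, while the $\mathcal{C}_i=\emptyset$ summands match because all one-dimensional marginals equal $\mu$. You also correctly isolate the one soft spot --- that the argument leans on reading singular-continuous conditioning as ill-defined --- and this is indeed the natural reading of the definition as written.

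The paper, however, takes a genuinely different route. It chooses $P(A)=\mathcal{N}_d(0,I)$, which is absolutely continuous, and lets $P(B)$ be the $\tfrac{1}{2}$--$\tfrac{1}{2}$ mixture of $P(A)$ with the diagonal law of $(Z,\ldots,Z)$, $Z\sim\mathcal{N}(0,1)$. It then invokes the equivalence inside the proof of Lemma~\ref{lem:strict_proper_req} and argues that for every tuple $(v_i,\mathcal{C}_i)$ the conditionals of $P(A)$ and $P(B)$ agree $P(A)$-almost surely, so the two expected scores are equal and \emph{finite}. The advantage of the paper's construction is substantive: it shows that strict propriety fails even in the nondegenerate regime where $\mathbbm{E}_{y\sim P(A)}[\text{CCRPS}_\mathcal{T}(P(A),y)]<\infty$, which is the interesting failure mode for a scoring rule. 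Your example proves the lemma as stated but only via the $\infty=\infty$ mechanism, which says nothing about the score's ability to separate distributions when it is finite and, as you note, places $P(A)$ outside the implicit scope of Lemma~\ref{applem:finiteness}. On the other hand, your argument is shorter and requires no computation of conditionals at all.
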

\begin{proof}
Consider $P(A) = \mathcal{N}_d(0, I)$, a $d$-variate standard normal distribution over random variables $A_1, \ldots A_d$. Next, consider $P(B)$ (over random variables $B_1, \ldots B_d$), which is defined as follows: with $0.5$ probability, $P(B)$ samples a vector $(z, \ldots, z) \in \mathbbm{R}^d$ with $z \in \mathcal{N}_1(0,1)$. Otherwise, $P(B)$ samples a vector i.i.d. to $P(A)$.\\
\\
\noindent Consider an arbitrary conditional specification $v_1, ..., v_q \in \{1, ..., d\}$ and $\mathcal{C}_1, ..., \mathcal{C}_q$ with $\mathcal{C}_i \subseteq \{1, ..., d\} \setminus \{v_i\}$. By Lemma \ref{lem:strict_proper_req}, it suffices to show that for any $1 \leq i \leq q$ we have:
\begin{equation}\label{eq:counter_example_to_prove}
    \mathbbm{P}_{y \sim P(A)}(P(A_{v_i} \mid \forall_{j \in \mathcal{C}_i}: A_j = y_j) \neq P(B_{v_i} \mid \forall_{j \in \mathcal{C}_i}: B_j = y_j)) = 0
    \end{equation}
Let us first start with the case that $\mathcal{C}_i = \emptyset$, i.e. marginal distributions. We find:
\begin{equation}
    f_{P(B_{v_i})}(y) = 0.5 \cdot f_{A_{v_i}}(y) + 0.5 \cdot f_{\mathcal{N}_1(0,1)}(y) = f_{\mathcal{N}_1(0,1)}(y) = f_{P(A_{v_i})}(y){,}
\end{equation}
thus $P(A_{v_i}) = P(B_{v_i})$, and Equation \eqref{eq:counter_example_to_prove} holds trivially.\\
\\
\noindent Secondly, let us now consider the case $\mathcal{C}_i \neq \emptyset$. Then we find:
\begin{equation}
    P(B_{v_i} \mid \forall_{j \in \mathcal{C}_i}: B_j = y_j) = 
    \begin{cases} P(z)\text{,} & \text{if } \exists_{z \in \mathbbm{R}}:\forall_{j \in \mathcal{C}_i}: y_j = z\\
    \mathcal{N}_1(0, 1)\text{,} & \text{otherwise}  \\
    \end{cases}
\end{equation}
Here, $P(z)$ denotes the probability distribution with CDF $F_{P(z)}(x) = \mathbbm{1}(x \leq z)$. With probability 1, any $y$ sampled from $A$ does not have equal dimensional variables, and we have $P(B_{v_i} \mid \forall_{j \in \mathcal{C}_i}: B_j = y_j) = \mathcal{N}_1(0, 1) = P(A_{v_i} \mid \forall_{j \in \mathcal{C}_i}: A_j = y_j)$. Therefore, we find:
\begin{equation}
    \mathbbm{P}_{y \sim P(A)}(P(A_{v_i} \mid \forall_{j \in \mathcal{C}_i}: A_j = y_j) \neq P(B_{v_i} \mid \forall_{j \in \mathcal{C}_i}: B_j = y_j)) = 0
\end{equation}
And thus Equation \eqref{eq:counter_example_to_prove} holds.
\end{proof}

\section{Expressions for Conditional CRPS}
As full expressions of Conditional CRPS can get quite lenghty, in this Appendix, we will give an overview on univariate conditional and marginal distributions of popular multivariate distributions, and their CRPS expressions. Since Conditional CRPS consists of summations over such CRPS terms (based on $\mathcal{T}$), this allows for easy derivation of conditional CRPS expressions.

\subsection{Multivariate Gaussian distribution}\label{sec:ccrps_gaus}
Consider a $d$-variate Gaussian distribution $\mathcal{N}_d(\mu, \Sigma)$ over random variables $A_1, \ldots, A_d$, with $\mu \in \mathbbm{R}^d$ and $\Sigma \in \mathbbm{R}^{d \times d}$ being positive definite. To be able to compute $P(A_i | \hat A = \hat a)$ for a group of variables $\hat A$, we will denote the mean vector and covariance matrix via block notation:
\begin{equation}\label{eq:block_notation}
    \overline \mu = \begin{bmatrix}
\mu_i\\
\mu_{\hat A}\\
\end{bmatrix},  \overline \Sigma = \begin{bmatrix}
\sigma^2_i & \rho\\
\rho^T & \Sigma_{\hat A}
\end{bmatrix}\text{.}
\end{equation}
Here, we have left out irrelevant variables. We then find that any univariate conditional or marginal is univariate Gaussian distributed
\begin{equation}\label{eq:GausCond}
    P(A_i| \hat A = \hat a) = \mathcal{N}_1\left(\mu_i + \rho \Sigma_{\hat A}^{-1}(\hat a - \mu_{\hat A}), \sigma^2_i - \rho \Sigma_{\hat A}^{-1}\rho^{T} \right)\text{.}
\end{equation}
In the special case that $\hat A = \emptyset$ we find $P(A_i) = \mathcal{N}_1(\mu_i, \sigma_i^2)$. A closed-form CRPS expression for univariate Gaussian distributions has been provided by Gneting et al. \cite{GneitingRaftery2}:
\begin{equation}\label{eq;CRPS:Gaussian}
    \text{CRPS}(\mathcal{N}_1(\mu, \sigma^2), y) = \sigma \cdot \left(\frac{y-\mu}{\sigma} \cdot \left(2\Phi(\frac{y-\mu}{\sigma}) - 1\right) + 2 \varphi(\frac{y-\mu}{\sigma}) - \frac{1}{\sqrt{\mathcal{C}}}\right)
\end{equation}
Here, $\varphi$ and $\Phi$ are the PDF and CDF of a standard normal distribution. 

\subsection{Multivariate mixture Gaussian distribution}\label{sec:ccrps_mixt}
 Let $A$ be a $d$-variate mixture Gaussian distribution over random variables $A_1, \ldots A_d$. $A$ consists of $m$ multivariate Gaussian distributions $D^{(1)}, \ldots, D^{(m)}$. We write the density as a weighted sum of the densities of the (multivariate Gaussian) mixture members:
\begin{equation}\label{eq:MixGauss1}
    f_A(x) = \sum_{j=1}^m \lambda^{(j)} \cdot f_{D^{(j)}}(x)\text{.}
\end{equation}
Rewriting this density to an expression for the conditional density via the chain rule, we find the following density for the conditional distribution of a variable $A_i$ given an observation of $\hat a$ a group of variables $\hat A$. If we similarly define $\hat D^{(j)}$ and $D_i^{(j)}$ to be random variables of the $j$'th mixture corresponding to $\hat A$ and $A_i$ respectively, then we find:
\begin{align}\label{eq:MixGausCond}
    \begin{split}
        f_{P(A_i | \hat A = \hat a)}(x) &=\sum_{j=1}^m \frac{\lambda^{(j)} \cdot f_{P(\hat D^{(j)})}(\hat a)}{f_{P(\hat A)}(\hat a)}\cdot f_{P(D^{(j)}_i | \hat D^{(j)} = \hat a)}(x)
    \end{split}\text{.}
\end{align}
 or in the case that $\hat A = \emptyset$, we find: $f_{P(A_i | \hat A = \hat a)}(x) =\sum_{j=1}^m \lambda^{(j)} \cdot f_{P(D^{(j)}_i)}(x)$. Hence, marginal and conditional distributions are mixtures of univariate Gaussian distributons, with parameters of $P(D^{(j)}_i | \hat D^{(j)} = \hat a)$ and $P(D^{(j)}_i)$ given via Equation \eqref{eq:GausCond}. The parameters found in Section \ref{sec:ccrps-mixt-gaus} are a special case of Equations \eqref{eq:MixGausCond} and \eqref{eq:GausCond} where $|\hat A| \leq 1$. Grimit et al. \cite{grimit} gave a closed-form expression of such distributions in Equation \eqref{eq:crps_grimit}.

 \subsection{Multivariate Log-normal distribution}\label{sec:ccrps_LN}
Let $A$, defined over random variables $A_1, \ldots A_d$ be a $d$-variate log-normal distribution with paramers $\mu \in \mathbbm{R}^d$ and $\Sigma \in \mathbbm{R}^{d \times d}$. That is, $(\log(A_1), \ldots, \log(A_d) \sim \mathcal{N}_{d}(\mu, \Sigma)$. Similarly to Gaussian distributions, the marginal and conditional distributions of a multivariate Log-normal distribution are univariate Log-normally distributed. Re-using the notation from Appendix \ref{sec:ccrps_gaus}, we find:
\begin{equation}\label{eq:LNCond}
    P(A_i| \hat A = \hat a) = LN_1\left(\mu_i + \rho \Sigma_{\hat A}^{-1}(\log (\hat a) - \mu_{\hat A}), \sigma^2_i - \rho \Sigma_{\hat A}^{-1}\rho^{T} \right)\text{.}
\end{equation}
A closed-form CRPS expression for univariate Log-normal distributions is given by Baran and Lerch \cite{BaranLerch}:
\begin{equation}\label{eq:CRPS:Logormal}
    \text{CRPS}(LN(\mu, \sigma^2), y) = \begin{cases}
y\left(2 \Phi\left(y_0\right)-1\right) - 2e^{\mu+\sigma^2/2} \left( \Phi\left(y_0-\sigma\right)+\Phi\left(\frac{\sigma}{\sqrt{2}}\right)-1\right) & \text{if } y > 0\\
2e^{\mu+\sigma^2/2} \left(\Phi\left(1 - \frac{\sigma}{\sqrt{2}}\right)\right) -y &\text{if } y \leq 0
\end{cases}
\end{equation}
Here, $y_0 = \frac{\log y - \mu}{\sigma}$ and $\Phi$ is the CDF of a standard normal distribution.
\subsection{Multivariate student-t distribution}
Let $A = t(\mu, \Sigma, \nu)$ be a $d$-variate student t-distribution, defined by the following density function \cite{roth2012multivariate}:
\begin{equation}
    f_A(x) = \frac{\Gamma[(\nu + d)/2]}{\Gamma(\nu/2)\nu^{d/2} \mathcal{C}^{d/2}}\left(1 + \frac{1}{\nu}(x - \mu)^\top \Sigma^{-1}(x - \mu)\right)^{-(\nu + d)/2}
\end{equation}
Re-using the block notation from Appendix \ref{sec:ccrps_gaus}, the conditional distribution $P(A_i | \hat A = \hat a)$ of a variable $A_i$ given a group of other variables $\hat A$ is given by a univariate student t-distribution with parameters given by Ding \cite{student-t-conditionals}:
\begin{equation}\label{eq:student-t-conditional}
    P(A_i | \hat A = \hat a) = t\left(\mu_{\hat A} + \rho \Sigma_{\hat A}^{-1}(\hat a - \mu_{\hat A}), \frac{\nu + s}{\nu + |\hat A|} \sigma^2_i - \rho \Sigma_{\hat A}^{-1}\rho^T, \nu + |\hat A|\right)
\end{equation}
here, $s = (\hat a - \mu_{\hat A})^\top \Sigma_{\hat A}^{-1} (\hat a - \mu_{\hat A})$. The CRPS of such distribution is then given by Alexander et al. \cite{JSSv090i12}:
\begin{equation}
    \text{CRPS}(t(\mu, \sigma, \nu),y) = \sigma \left(y_0 (2 F_\nu(y_0) - 1) + 2 f_\nu(y_0) \left(\frac{\nu + y_0^2}{\nu - 1}\right) - \frac{2\sqrt{\nu}B(\frac{1}{2}, \nu - \frac{1}{2})}{(\nu - 1)B\left(\frac{1}{2},\frac{\nu}{2}\right)^2}\right)
\end{equation}
Here, $y_0 = \frac{y - \mu}{\sigma}$. $f_\nu(x)$ and $F_\nu(x)$ are the PDF and CDF of the standard student-t distribution with $\nu$ degrees of freedom.

\subsection{Dirichlet distribution}
Consider a Dirichlet distribution, $A = \text{Dir}(\alpha_1, ..., \alpha_d)$, is defined on a Euclidean space $\mathbbm{R}^{d-1}$ and given by its density function:
\begin{equation}
    f_A(x_1, ..., x_d) = \frac{\Gamma(\sum_{i=1}^d \alpha_i)}{\prod_{i=1}^d \Gamma(\alpha_i)}\prod_{i=1}^d x_i^{\alpha_i-1}
\end{equation}
Here $\Gamma(x)$ is the Gamma function, for all $i$ we have, $x_i \in [0,1]$, and $\sum_{i=1}^d x_i = 1$. Let $\alpha_0 = \sum_{j=1}^d \alpha_j$. Using the additive property of Dirichlet distributions, we find that its $i$'th marginal is simply given by Beta distributions: $P(A_i) = B(\alpha_i, \alpha_0 - \alpha_i)$, and similarly, its univariate conditionals have scaled Beta distributions, i.e, for the distribution $P(A_i | \hat A = \hat a)$ for a variable $A_i$ given an observation $\hat a$ of a group of other variables $\hat A$.
\begin{equation}\label{eq:scaled_beta}
\frac{1}{1 - \sum_j \hat a_j} A_i | \hat A = \hat a \sim B\left(\alpha_i, \alpha_0 - \alpha_i\right)
\end{equation}
The CRPS of a Beta distribution is given by Taillardat et al. \cite{crps_qrf}:
\begin{align}\label{eq:CRPS:beta}
\begin{split}
\text{CRPS}(B(\alpha, \beta), y) & = \frac{\alpha}{\alpha+\beta}\left(1 - \Phi(y; \alpha+1, \beta)\right) - y\left(1 - 2\Phi(y; \alpha, \beta)\right)- \frac{1}{\alpha+\beta} \frac{\Gamma(\alpha+\beta)\Gamma\left(\alpha+\frac{1}{2}\right)\Gamma\left(\beta+\frac{1}{2}\right)}{\sqrt{\mathcal{C}} \Gamma\left(\alpha+\beta+\frac{1}{2}\right)\Gamma(\alpha)\Gamma(\beta)}
\end{split}
\end{align}
Here, $\Gamma(x)$ is the Gamma function, and $\Phi(x;\alpha, \beta)$ is the CDF of a Beta distribution with parameters $\alpha$ and $\beta$, extended with $\Phi(x;\alpha,\beta) = 0$ for $x < 0$ and $\Phi(x;\alpha,\beta) = 1$ for $x > 1$. The CRPS of a scaled beta distribution given in Equation \eqref{eq:scaled_beta} is simply given by rescaling:
\begin{equation}
\text{CRPS}(P(A_i | \hat A = \hat a), y) = \left(1 - \sum_j \hat a_j\right) \cdot \text{CRPS}\left(B(\alpha_i, \alpha_0 - \alpha_i), \frac{y}{1 - \sum_j \hat a_j}\right)\text{.}
\end{equation}

\section{Approximations of the Energy Score and Variogram Score}
In this appendix, we will specify the approximations and analytic formulas used to compute the Variogram Scores and Energy Scores presented in our experiments. First, recall the Energy Score \cite{gneitingRaftery}:
\begin{equation}\label{eq:ES}
    \text{ES}(A, y) = \mathbbm{E}_{x \sim A}[\lVert x - y \rVert]_2^\beta - \frac{1}{2}\mathbbm{E}_{x, x' \sim A}[\lVert x - x' \rVert]_2^\beta\text{.}
\end{equation}
For (weighted) ensemble distributions, i.e. distributions defined as a set of vectors $x_1, \ldots, x_m$ with probabilities $w_1, \ldots, w_m$ such that $\sum_{i=1}^m w_i = 1$, we find rather simple expression for the Energy Score:
\begin{equation}\label{eq:es_approx}
    \text{ES}(A, y) \approx \sum_{i=1}^m w_i \lVert x_i - y\rVert_2^\beta - \frac{1}{2}\sum_{i,j=1}^d w_i w_j \lVert x_i - x_j\rVert_2^\beta
\end{equation}
For multivariate mixture Gaussian distributions, we approximated the Energy score by Equation \eqref{eq:es_approx}, sampling $v_1, \ldots, v_m \sim A$, and $\forall_{i=1}^m: w_i = \frac{1}{m}$.

Next, recall the Variogram Score \cite{variogram_score}:
\begin{equation}
    \text{VarS}_{p}(A, y) = \sum_{i < j}^d (|y_i - y_j|^p - \mathbbm{E}_{x \sim A}\left[|x_i - x_j|^p\right])^2\text{.}
\end{equation}
For (weighted) ensemble distributions, we applied the following approximation for $\mathbbm{E}_{x \sim A}\left[|x_i - x_j|^p\right]$:
\begin{equation}
\mathbbm{E}_{x \sim A}\left[|x_i - x_j|^p\right] \approx \sum_{j=1}^d w_i \cdot | v_i - v_j|^p\text{,}
\end{equation}
For mixtures of $d$-variate Gaussian distributions, defined over weights $\lambda^{(1)}, \ldots \lambda^{(m)} \in [0,1]$, covariance matrices $\Sigma^{(1)}, \ldots , \Sigma^{(m)} \in \mathbbm{R}^{d \times d}$ and mean vectors $\mu^{(1)}, \ldots , \mu^{(m)} \in \mathbbm{R}^{d}$ we found the following expression for $\mathbbm{E}_{x \sim A}\left[|x_i - x_j|^p\right]$, based on work by Winkelbauer \cite{abs_moments_Gaussian}:
\begin{equation}
\mathbbm{E}_{x \sim A}\left[|x_i - x_j|^p\right] = \sum_{k=1}^m \lambda_i \cdot \hat \sigma_{ijk}^p \cdot 2^{p/2} \cdot \frac{\Gamma((p+1)/2)}{\sqrt{\mathcal{C}}} \cdot {}_1F_1\left(-\frac{p}{2}, \frac{1}{2}; -\frac{\hat \mu_{ijk}^2}{2\hat \sigma_{ijk}^2}\right)\text{.}
\end{equation}
Here, $\hat \sigma_{ijk} = \sqrt{\Sigma_{ii}^{(k)} + \Sigma_{jj}^{(k)} - 2\Sigma_{ij}^{(k)}}$, $\hat \mu_{ijk} = \mu_i^{(k)} - \mu_j^{(k)}$, $\Gamma(.)$ denotes the Gamma function and $_1 F_1$ denotes Kummer's confluent hypergeometric function \cite{magnus1966formulas}. This is a different formula for the Variogram Score of a multivariate (mixture) Gaussian, than is used by a.o. Bjerregård et al. \cite{BJERREGARD2021100058}, and avoids the approximation of an integral.

\section{Algorithms}
The data generating algorithm for Figure 2 in the main paper is described in Algorithm \ref{alg:corr_checking}.
\begin{algorithm}
    \caption{Mean score computation.}\label{alg:corr_checking}
    \begin{algorithmic}
        \Function{Compute\_score}{$R$, $\mu$, $\rho$, $\sigma$, $n = 5000$}
            \State $P_{\text{true}} = \mathcal{N}\left(\begin{pmatrix} 1\\-1\end{pmatrix}, \begin{pmatrix} 1 & 0.8\\ 0.8 & 4\end{pmatrix}\right)$ \hfill\Comment{Define the true data distribution}
            \State Sample $v_1, \ldots, v_n \sim P_\text{true}$. \hfill\Comment{Sample $n$ vectors form the data distribution}
            \State $P = \mathcal{N}\left(\begin{pmatrix} \mu \\-1\end{pmatrix}, \begin{pmatrix} \sigma^2 & 2\rho\sigma\\ 2\rho\sigma & 4\end{pmatrix}\right)$\hfill\Comment{Define the predicted distribution}
            \State\Return $\frac{1}{n} \sum_{i=1}^n R(P, v_i)$\hfill\Comment{Compute the score over the data and prediction}
        \EndFunction
    \end{algorithmic}
\end{algorithm}

\end{document}